\newtheorem{definition}{Definition}
\newtheorem{theorem}{Theorem}
\newtheorem{lemma}{Lemma}
\newtheorem{property}{Property}
\newtheorem{remark}{Remark}\theoremstyle{remark}
\DeclareMathOperator*{\argmax}{argmax}
\DeclareMathOperator*{\maxargmax}{(\, max \, , \, argmax \,)}
\DeclareMathOperator*{\minargmin}{(\, min \, , \, argmin \,)}
\DeclareRobustCommand{\sbullet}{\accentset{\hbox{\fontfamily{lmr}\fontsize{7}{0}\selectfont\textbullet}}}
\newcounter{hdps}
\newenvironment{empty_env}[1][htb]
{
\refstepcounter{hdps}

}
\DeclarePairedDelimiter\floor{\lfloor}{\rfloor}
\begin{document}
\title{Assessing hierarchies by their consistent segmentations}

\author[1]{Zeev Gutman}\email{szeg25@gmail.com}

\author[2]{Ritvik Vij}\email{ritvikvi@amazon.com}
\author*[3]{Laurent Najman}\email{laurent.najman@esiee.fr}
\author[4]{Michael Lindenbaum}\email{mic@cs.technion.ac.il}

\affil[1]{\orgname{Rafael}, \orgaddress{\country{Israel}}}
\affil[2]{\orgdiv{Amazon}, \orgname{India}}
\affil[3]{\orgname{Univ Gustave Eiffel, CNRS}, \orgdiv{LIGM}, \orgaddress{\postcode{F-77454} \city{Marne-la-Vall\'ee},  \country{France}}}
\affil[4]{\orgname{CS dept.},\orgdiv{Technion}, \orgaddress{\city{Haifa}, \country{Israel}}}

\abstract{
Current  approaches to generic segmentation start by creating a hierarchy of nested image partitions and then specifying a segmentation from it. 
Our first contribution is to describe several ways, most of them new, for specifying segmentations using the hierarchy elements. 
Then, we consider the best hierarchy-induced segmentation specified by a limited number  of hierarchy elements. 
We focus on a common quality measure for binary segmentations, the Jaccard index (also known as IoU).  Optimizing the Jaccard index is highly non-trivial, and yet we propose an efficient approach for doing exactly that. This way we get algorithm-independent upper bounds on the quality of any segmentation created from the hierarchy. We found that the obtainable segmentation quality varies significantly depending on the way that the segments are specified by the hierarchy elements, and that representing a segmentation with only a few hierarchy elements is often possible. (Code  is available). 
}

\keywords{Hierarchical segmentation -- Image segmentation -- Evaluation -- Jaccard index}
\maketitle

\section{Introduction}\label{sec:introduction}

Generic ({\em i.e.}, non-semantic) image segmentation is widely used in various tasks of image analysis and computer vision. 
A variety of image segmentation methods are proposed in the literature, including the watershed method \cite{beucher1979international}, level-set method \cite{osher1988fronts}, normalized cuts \cite{shi2000normalized}, and many others. 
Modern generic segmentation algorithms use (deep) edge detectors and watershed-like merging \cite{maninis2017convolutional}. Augmenting the detected edges with region descriptors improves segmentations  \cite{isaacs2020enhancing}. Note that generic image segmentation, the topic we are focusing on in this paper, is different from 
semantic image segmentation, which provides segmentation of objects from specific classes with the help of (deep) image classifiers \cite{lateef2019survey,minaee2021image}. 

Segmentation (generic or semantic) is useful for numerous applications, such as image enhancement~\cite{jam2020comprehensive}, image analysis \cite{redmon2016you}, and medical image analysis~\cite{chen2018survey}.

The dominant generic segmentation algorithms ({\em e.g.}, \cite{maninis2017convolutional}) are hierarchical and built as follows: first, an oversegmentation is carried out, specifying superpixels as the elements to be grouped. Then a hierarchical structure (usually represented by a tree) is constructed with the superpixels as its smallest elements ({\em i.e.}, leaves). The regions specified by the hierarchy are the building blocks from which the final segmentation is decided.  Restricting the building blocks to the elements of the hierarchy yields simple, effective algorithms at a low computational cost. Most segmentation methods build the segmentation from the hierarchy by choosing a cut from a limited cut set. Our first contribution is to generalize this choice. We systematically consider all possible ways for specifying a segmentation, using set operations on elements of the hierarchy. Most of these methods are new.

We are also interested in the limitations imposed on the segmentation quality by using the hierarchy-based approach. These limitations depend on (1) the quality of the hierarchy, (2) the number of hierarchy elements (nodes) that may be used, and (3) the way that these elements are combined. We investigate all these causes in this paper. The quality is also influenced by the oversegmentation quality, which was studied elsewhere~\cite{achanta2012slic}.

The number of hierarchy elements determines the complexity of specifying a segmentation. 
Lower complexity is advantageous by the minimum description length (MDL) principle, which  minimizes a cost composed of the description cost and the approximation cost, and relies on statistical justifications 
\cite{cook1993substructure,grunwald2007minimum,rissanen1978modeling,quinlan1989inferring,veras2016optimizing}.
Moreover, representation by a small number of elements opens possibilities for a new type of segmentation algorithms that are based on search, for example, in contrast to the greedy current algorithms.
The number of elements needed also indicates, in a sense, how much information about the segmentation is included in the hierarchy, and thus, it provides a measure of quality for the hierarchy as an image descriptor, as well as a global measure of the associated boundary operator.

To investigate the {\em hierarchy-induced  limitations}, we optimize the segmentation from elements of a given hierarchy. We consider binary segmentation, and use the Jaccard index (IoU) measure of quality \cite{jaccard1901etude}. More precisely, we use image-dependent  oversegmentation and hierarchies produced by algorithms that have access only to the image. However, we allow the final stage, which constructs the segmentation from the hierarchy elements, to have access to the ground-truth segmentation. As a result, the imperfections of the optimized segmentation correspond only to its input, {\em i.e.}, to the hierarchy. Thus, the results we obtain are upper bounds on the quality that may be achieved by any realistic algorithm, that does not have access to the ground truth, but relies on the same hierarchy. 

Optimizing the Jaccard index is highly nontrivial, but we provide a framework that optimizes it exactly and effectively.  
Earlier studies either use simplistic quality measures or rely on facilitating constraints~\cite{pont2012upper, pont2012supervised}. 

The contributions of this work are:
\begin{enumerate}[noitemsep]
\item Four different methods
for specifying a hierarchy-induced segmentation. These methods are denoted 
(segmentation to hierarchy)  {\em consistencies}.

\item Efficient and exact algorithms for finding the best segmentation (in the sense of maximizing the Jaccard index) that is consistent with a given hierarchy. We provide four algorithms\footnote{Code is available at \href{https://github.com/ritvik06/Hierarchy-Based-Segmentation}{https://github.com/ritvik06/Hierarchy-Based-Segmentation}}, one for each consistency. 
The algorithms are fast, even for large hierarchies. 

\item A characterization of the limits of hierarchy-induced segmentation. Notably, this characterization is also a measure of the hierarchy quality.
\end{enumerate}

This paper considers segmentation of images, but all the results apply as well to the partition of general data sets. The paper continues as follows. First, we describe terms and notations required for specifying the task (Section \ref{sec:preliminaries}). In Section \ref{sec:problemformulation}, we present our goal and discuss the notion of consistencies, which is central to this paper. In Section \ref{sec:previouswork}, we review several related works. In Section \ref{sec:ourapproach}, we develop an indirect   optimization approach that relies on the notion of co-optimality and enables us to optimize certain quality measures.  Section 
\ref{sec:Jaccard} 
provides particular optimization algorithms and the corresponding upper bounds for the Jaccard index and the different consistencies. The bounds are evaluated empirically in Section \ref{sec:experiments}, which also provides some typical hierarchy-based segmentations.
Finally, we conclude and suggest some extensions in Section \ref{sec:conclusions}.

\begin{figure*}[t]
	\def\svgwidth{1\textwidth}
    \centering{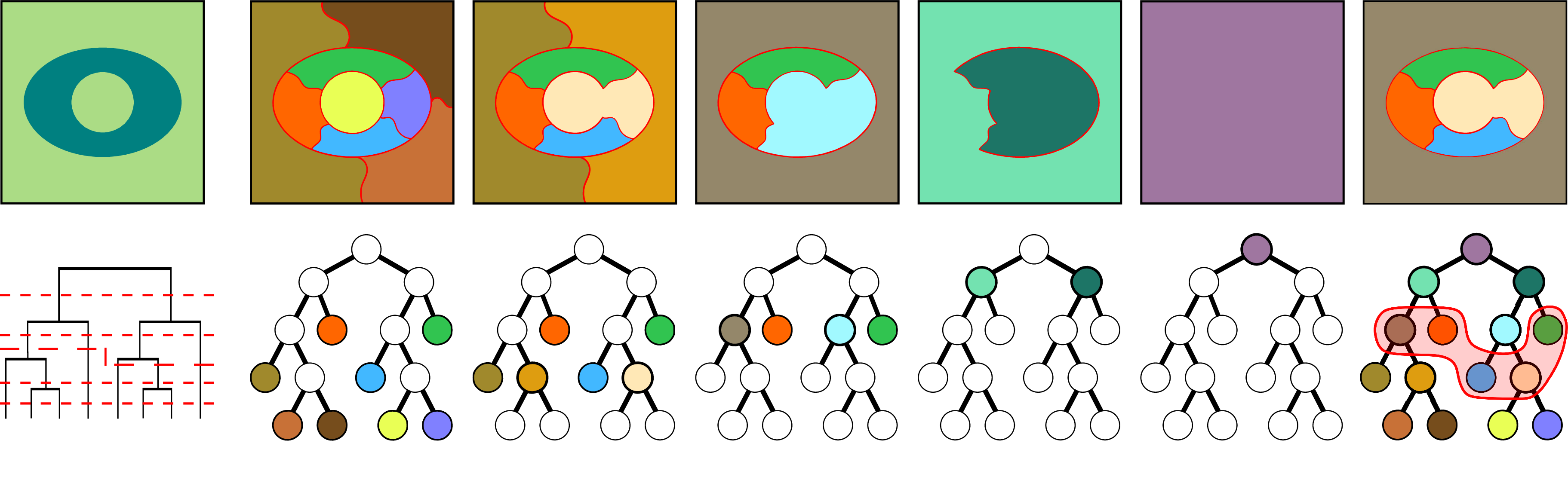}
	\caption{{\bf (a)} The true segmentation (GT). {\bf (b-f)} A chain of the image partitions: $\Pi = \{\pi_0, \, \pi_1, \, \pi_2, \, \pi_3, \, \pi_4\},$ which yields a hierarchy ${\cal T} = \{N_1, \dots , N_{15}\}$. Each $\,\pi_i\,$ is represented in the Binary Partition Tree (representing $\,\cal T$) by a set of colored nodes. {\bf (g)} Another partition of the image, denoted $\,\pi'$. The nodes representing $\,\pi'\,$ (shaded red) are a cut of the hierarchy $\,\cal T,$ and are the leaves of a tree $\,\cal T'$ obtained by pruning $\,\cal T.$ {\bf (h)} The dendrogram representing $\,\cal T.$ Each partition of the above is represented by a cut of the dendrogram (red dashed lines).}
	\label{fig:hierarchy-examp}
\end{figure*}

\section{Preliminaries}
\label{sec:preliminaries}
\subsection{Hierarchies}\label{sec:introduction:hierarchies}

The following definitions and notations are standard, but are presented here for the sake of completeness. Recall that a partition of a set I is a set of non-empty subsets of I, such that every element in I is in exactly one of these subsets ({\em i.e.}, I is a disjoint union of the subsets). In this paper, these subsets are referred to as regions. Moreover,  all examples are done with connected regions, but the connectivity constraint is not needed for the theory and algorithms. 

Let $\pi_1$ and $\pi_2$ be two partitions of a pixel set $I.$ Partition $\pi_1$ is {\em finer} than partition $\pi_2,$ denoted $\pi_1\leq \pi_2,$ if each region of $\pi_1$ is included in a region of $\pi_2.$ In this case, we also say that $\pi_2$ is {\em coarser} than $\pi_1.$ Let $\Pi$ be a finite chain of partitions $\Pi\!=\!\{ \pi_i \,\, | \,\, 0 \leq i \leq j \leq n \implies \pi_i \leq \pi_j \}$ where $\pi_0$ is the finest partition and $\pi_n$ is the trivial partition of $\,I\,$ into a single region: $\pi_n = \{I\}$. A hierarchy $\,\cal{T}$ is a pool of regions of $\,I,$ called {\em nodes}, that are provided by elements of $\Pi\!:\,\,{\cal T} \!=\! \{\, N \! \subset I \,\, | \,\, \exists \, \pi_i \in \Pi \, : \, N \! \in \! \pi_i \,\}.$ For any two partitions from $\Pi,$ one is finer than the other, hence, any two nodes $N_1, N_2 \!\in\! \cal T$ are either nested $(N_1 \!\subset\! N_2$ or $N_2 \!\subset\! N_1),$ or disjoint $(N_1 \!\cap\! N_2 \!= \emptyset);$ see Figure \ref{fig:hierarchy-examp}.

Let $N_1$ and $N_2$ be two different nodes of $\,\cal T$. We say that $N_1$ is the {\em parent} of $N_2\,$ if $\,N_2 \subset N_1$ and there is no other node $N_3 \! \in \! {\cal T}$ such that $N_2 \! \subset \! N_3 \! \subset \! N_1$. In this case, we also say that $N_2$ is a {\em child} of $N_1$. Note that every node has exactly one parent, except $I \in \pi_n\,,$ which has no parent. Hence, for every node $N \! \in \! {\cal T}$, there is a unique chain: $N \! = \! N_1 \! \subset \! \dots \! \subset \! N_k \! = \! I\,,$ where $\,N_i\,$ is the parent of $\,N_{i-1}$. Thus, the parenthood relation induces a representation of $\,\cal T$ by a tree, 
in which the nodes of $\,\pi_0\,$ are the leaves, and the single node of $\,\pi_n\,$ is the root; see Figure \ref{fig:hierarchy-examp}. Hence, we also refer to $\,\cal T$  as a tree. %
When each non-leaf node\footnote{Note that the term node may refer to the node in the tree but also to the corresponding image region,  when the context is clear.} of $\,\cal T$ has exactly two children, $\cal T$ is a {\em binary partition tree} (BPT) \cite{salembier2000binary, lu2007binary, pont2012upper, pont2012supervised}. In this paper, we focus on BPTs, but our results hold for non-binary trees as well.

A hierarchy $\cal T$ can be represented by a dendrogram, and every possible partition of $I$ corresponds to a set of $\cal T$'s nodes and may be obtained by “cutting” the dendrogram; see Figure \ref{fig:hierarchy-examp}. In the literature, any partition of $I$ into nodes of $\cal T$ is called a {\em cut of the hierarchy} \cite{guigues2006scale, xu2016hierarchical}. Every $\pi_i \in \Pi$ is  a {\em horizontal cut} of the hierarchy, but there are many other ways to cut the hierarchy, and each cut specifies a partition of $\,I.$ 
As we shall see later, a hierarchy may induce other partitions of $I$. %

Pruning of a tree in some node $N$ is a removal from the tree of the entire subtree rooted in $N,$ except $N$ itself, which becomes a leaf. Each cut of a hierarchy represents a tree $\,\cal T'$ obtained by pruning $\,\cal T,$ by specifying the leaves of $\,\cal T'$; see Figure \ref{fig:hierarchy-examp}. The converse is also true: the leaves of a tree obtained by pruning $\,\cal T$ are a cut of the hierarchy. That is, a subset of nodes $\,\cal N \! \subset \! T$ is a cut of the hierarchy, if and only if $\,\cal N$ is the set of leaves of a tree obtained by pruning $\,\cal T$. More precisely, $\,\cal N \! \subset \! T$ is a cut of the hierarchy, if and only if for every leaf in $\,\cal T$, the only path between it and the root contains exactly one node from $\,\cal N$. Often, a segmentation is obtained by searching for the best pruning of $\,\cal T$. However, the cardinality of the set of all prunings of $\,\cal T$ grows exponentially with the number of leaves in $\,\cal T$ \cite{pont2012upper}. Thus, it is unfeasible to scan this set exhaustively by brute force.

\subsection{Coarsest partitions}\label{sec:introduction:coarsest}

We use the following notations. The cardinality of a set is denoted by $|\, \cdot \,|$. The initial partition $\pi_0$ of $\,I,$ which is the set of leaves of the tree $\,\cal T,$ is  denoted by $\,\cal L.$ Let $N \! \in \! \cal T ; \,$ we denote by ${\cal T}^N \!\! \subset \! \cal T$ (resp. ${\cal L}^N \!\! \subset \! \cal L\,$) the subset of nodes of $\,\cal T$ (resp. $\cal L$) included in $N$. Note that $\,{\cal T}^N$ is represented by the subtree of $\,\cal T$ rooted in $N$; hence, we refer to $\,{\cal T}^N$ also as a subtree, and to $\,{\cal L}^N$ as the leaves of this subtree.

Let $\,Y \! \subset I$ be a pixel subset. We refer to any partition of $Y$ into nodes of $\cal T$ (namely, a subset of disjoint nodes of $\,\cal T$ whose union is $\,Y$) as a {\em$\,\cal T\!$-partition of $\,Y$}. Note that a $\cal T$-partition of $\,Y$ does not necessarily exist. We refer to the smallest subset of disjoint nodes of $\,\cal T$ whose union is $\,Y\,$ as the {\em coarsest $\,\cal T\!$-partition of $\,Y$}. Obviously, $\,\cal N \! \subset \! T\,$ is a cut of the hierarchy if and only if $\,\cal N\,$ is a $\,\cal T$-partition of $\,I$. 
$\,\cal N \! \subset \! T\,$ is a $\,\cal T$-partition of a node $N \! \in \! \cal T,\,$ if and only if $\,\cal N$ is the set of leaves of a tree obtained by pruning $\,{\cal T}^N\!$. Obviously, the coarsest $\,\cal T$-partition of a node $N \! \in \! \cal T$ is $\{N\}.$

Figure \ref{fig:hierarchy-disj-compl} illustrates several ways of representing a region using a hierarchy and the corresponding coarsest partition. 

\begin{property}\label{prop:T-partition}
	 A non-coarsest $\,\cal T$-partition of a node $N \! \in \! \cal T$ is a union of $\,\cal T$-partitions of its children.
\end{property}
\noindent
In Figure \ref{fig:hierarchy-examp}(g), for example, the subset $\{N_4, N_5, N_8, N_{10}, N_{11}\}$ is a non-coarsest $\cal T$-partition of $N_{15}\,$ whereas, $\{N_4, N_{11}\}$ and $\{N_5, N_8, N_{10}\}$ are $\cal T$-partitions of the children of $N_{15}: \, N_{13}$ and $N_{14},$ respectively.

\begin{lemma}
\label{lemma:coarsest-T-partition}
(See Appendix \ref{lemma:coarsest-T-partition:Proof} for the proof.)
\begin{enumerate}
\item[\it i.] A $\,\cal T$-partition of a pixel subset $\,Y \! \subset I\,$ is non-coarsest, if and only if it contains a non-coarsest $\,\cal T$-partition of some node $\,N \! \in \! \cal T$ that is included in $\,Y$ ($\,N \! \subset Y$).
			
\item[\it ii.] When the coarsest $\,\cal T$-partition of a pixel subset $\,Y \! \subset I\,$ exists, it is unique.
\end{enumerate}
\end{lemma}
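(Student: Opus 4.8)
The plan is to reduce both parts to a single structural fact: when a $\cal T$-partition of $Y$ exists, there is a canonical one that is \emph{coarser than every other} $\cal T$-partition of $Y$, in the sense that each of its regions contains a whole sub-collection of any competing partition's regions. I would prove this first and treat it as the heart of the argument, since Part ii and the forward implication of Part i both fall out of it.

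I would build this canonical partition recursively on the tree, using that nodes of $\cal T$ are pairwise nested or disjoint. Let $N^\ast$ be the smallest node containing $Y$; it is well defined because all nodes containing $Y$ pairwise intersect, hence form a chain, and the root $I$ is one of them. If $Y = N^\ast$ the canonical partition is $\{N^\ast\}$. Otherwise $Y \subsetneq N^\ast$, and writing $C_1,C_2$ for the children of $N^\ast$, every node contained in $Y$ is a proper subset of $N^\ast$ and therefore sits inside $C_1$ or inside $C_2$ (a node nested-or-disjoint with $C_1$, lying in $C_1 \sqcup C_2$, and with $C_1$'s only parent being $N^\ast$). Hence $Y = (Y\cap C_1)\sqcup(Y\cap C_2)$ and, by the same nested-or-disjoint reasoning behind Property \ref{prop:T-partition}, any $\cal T$-partition of $Y$ splits into $\cal T$-partitions of $Y\cap C_1$ and $Y\cap C_2$ inside the subtrees $\cal T^{C_1},\cal T^{C_2}$. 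I would define the canonical partition $\kappa(Y)$ as the union of $\kappa(Y\cap C_1)$ and $\kappa(Y\cap C_2)$, and by induction on the height of the subtree obtain: a $\cal T$-partition of $Y$ exists iff the recursion succeeds in both children, $\kappa(Y)$ is then a genuine $\cal T$-partition of $Y$, and it is coarser than every $\cal T$-partition of $Y$ (each region of an arbitrary partition lies inside one child and, inductively, inside a region of $\kappa$).

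Part ii is then immediate: two partitions that are each coarser than all $\cal T$-partitions of $Y$ are coarser than one another, hence equal, so the coarsest partition is unique and equals $\kappa(Y)$. For Part i, the $(\Leftarrow)$ direction is the easy merge step: if $\cal P \supset \cal Q$ where $\cal Q$ is a $\cal T$-partition of a node $N\subset Y$ with $|\cal Q|\ge 2$, then $\cal P' = (\cal P\setminus\cal Q)\cup\{N\}$ is again a $\cal T$-partition of $Y$ (the regions of $\cal P\setminus\cal Q$ are disjoint from $\cup\cal Q = N$, and the union is unchanged) but strictly smaller, so $\cal P$ is non-coarsest. For $(\Rightarrow)$, if $\cal P$ is non-coarsest then $\cal P\ne\kappa(Y)$ while $\kappa(Y)$ is coarser than $\cal P$; grouping the regions of $\cal P$ by the region of $\kappa(Y)$ containing them, some group under a region $M\in\kappa(Y)$ has at least two members because $|\cal P|>|\kappa(Y)|$, and that group tiles $M$, yielding the required non-coarsest $\cal T$-partition of a node $M\subset Y$.

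The main obstacle is the coarser-than-all property itself, i.e.\ ruling out two incomparable $\cal T$-partitions of $Y$ with neither refining the other; this is exactly what the nested-or-disjoint structure, funneled through the child-splitting of Property \ref{prop:T-partition}, provides, and once it is in place both the uniqueness in Part ii and the forward implication in Part i become bookkeeping. A secondary point to handle carefully is the degenerate cases of the recursion, namely an empty $Y\cap C_j$ and the nonexistence of a $\cal T$-partition in one child, which must be propagated so that existence for $Y$ is equivalent to existence in both children.
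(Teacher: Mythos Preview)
Your proof is correct and takes a genuinely more constructive route than the paper. The paper argues directly: given a coarsest $\cal T$-partition $\cal N'$ (which exists by finiteness whenever any $\cal T$-partition exists), every other $\cal T$-partition $\cal N$ must refine it, because each node of $\cal N$ is nested-or-disjoint with each node of $\cal N'$, and if some $M\in\cal N$ strictly contained several nodes of $\cal N'$ one could shrink $\cal N'$. From this single refinement fact the forward direction of (i) and the uniqueness in (ii) each follow in one line. You instead build the canonical partition $\kappa(Y)$ explicitly by recursion down the tree and prove it is coarser than every $\cal T$-partition, reaching the same refinement conclusion with an explicit witness and a cleaner treatment of existence. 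The paper's version is shorter; yours is more structural and actually produces the coarsest partition, which could be reused elsewhere. One small wording point: in Part ii you invoke ``two partitions each coarser than all'', but you have only established this for $\kappa(Y)$; the direct bridge is that any minimum-cardinality $\cal T$-partition $\cal N$ satisfies $|\kappa(Y)|\ge|\cal N|$ while $\kappa(Y)$ is coarser than $\cal N$, forcing $\cal N=\kappa(Y)$.
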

  
\section{Problem formulation}\label{sec:problemformulation}

\subsection{The general task }\label{sec:task}

As discussed in the introduction, we consider only segmentations that are consistent in some way with a given hierarchy, and aim to find  limitations on their quality. 

Obviously, the quality improves with the number of regions. 
More precisely, 

\noindent{\normalsize\bf General task:} {\em Given a hierarchy and a measure for estimating segmentation quality, we want to find a segmentation that has the best quality, is consistent with the hierarchy, and uses no more than a given number of regions from it.}

To make this task well-defined, we now specify and formalize the notion of segmentation consistency with a hierarchy.

\subsection{Various types of consistency between a segmentation and a hierarchy}\label{sec:Consistensies}

We consider segmentations whose (not necessarily connected) segments are specified by set operations on the nodes of $\,\cal T$. The intersection between two nodes in a hierarchy is either empty or one of the nodes. Therefore, we are left with union and set-difference. Complementing (with respect to $I$) is  allowed as well. By further restricting the 
particular operations and the particular node subsets on which they act, we get different, non-traditional, ways for specifying segmentation from a hierarchy. We denote these different ways as (hierarchy to segmentation) consistencies.  

\begin{definition}{\bf  Consistency.}
Let $\,\cal Y\,$ be a set of pixel subsets; we denote by $\,\sbullet{\cal Y} \subset I$ the union of all elements of $\,\cal Y.$ 
We say that: 
\begin{enumerate}
	\itemsep2pt%
	
	\item[(a)]   A segmentation $s$ is {\em  a-consistent} with $\cal T$ if there is a subset $\,{\cal N}_s \! \subset \! \cal T\,$ such that each segment in $s$ is a single node of $\,{\cal N}_s.$
	
	\item[(b)] A segmentation $s$ is {\em  b-consistent} with $\cal T\,$ if there is a subset $\,{\cal N}_s \! \subset \! \cal T\,$ such that each segment in $s$ is a union of some nodes of $\,{\cal N}_s.$
	
	\item[(c)]  A segmentation $s$ is {\em c-consistent} with $\cal T\,$ if there is a subset $\,{\cal N}_s \! \subset \! \cal T\,$ such that each segment in $s,$ except at most one, is a union of some nodes of $\,{\cal N}_s.$ One complement segment, if it exists, is $I \backslash \sbullet{\cal N}_s$.
	
	\item[(d)]  A segmentation $s$ is {\em d-consistent} with $\cal T\,$ if there is a subset $\,{\cal N}_s \! \subset \! \cal T\,$ such that each segment, except at most one, is obtained by unions and/or differences of nodes of $\,{\cal N}_s.$ One complement segment, if it exists, is $I \backslash \sbullet{\cal N}_s.$
	
\end{enumerate}
\end{definition}
\begin{remark}\label{remark:consistencies}
		Consistency of some type,  with the subset ${\cal N}_s$, implies consistency of a later, more general type, with the same subset ${\cal N}_s$. 
\end{remark}

\begin{remark}\label{remark:generality}
We argue that these four consistency types systematically cover all possibilities. The first choice is whether the nodes subset ${\cal N}_s$ should be limited to a hierarchy cut or not. For the cut case (which is the popular choice in the literature), union is the only set operation that makes sense, because set difference between disjoint nodes is empty and the cut covers the full image, making the complement empty as well. For the more general case, where the subset ${\cal N}_s$ is not necessarily a cut,  both the union and the set difference are relevant. Unions without set difference is an important special case that is simpler both conceptually and computationally. Set difference between two nodes without additional unions does not seem to justify another consistency type (and is included, of course, in d-consistency). 
\end{remark}

Figure \ref{fig:hierarchy-segm-examp} illustrates the different consistencies.
The a-consistency is used in most hierarchy-based segmentation algorithms, where some cut is chosen and all its leafs are specified as segments; see    \cite{pont2012supervised, pont2016supervised}.
To the best of our knowledge, the b-, c-, and d- consistencies were not used in the context of hierarchical segmentation; see however \cite{passat2011selection} for (c-consistency-like) node selection in a  hierarchy of components. 

As specified above, the subset $\,{\cal N}_s$, specified for a segmentation $s$, is not necessarily unique; see Figures \ref{fig:hierarchy-disj-compl} and \ref{fig:hierarchy-segm-examp}(c,d). From  this point forward in this paper, $\,{\cal N}_s$ is considered as the minimal set so that all nodes in it are required to specify $s$. %
As a result, 
\begin{property}\label{prop:consistencies}
	
	If ${\cal N}_s \!\subset\! \cal T$ is a subset associated with some consistency type a/b/c/d of a segmentation $s,$ then
	\begin{enumerate}
		\itemsep2pt%
		
		\item[(a)] $s\,$ is a-consistent with $\,\cal T,\,$ if and only if $\,{\cal N}_s\,$ is a cut of $\,\cal T$ such that each segment of $s$ is a single node of $\,{\cal N}_s.$ The subset $\,{\cal N}_s\,$ associated with a-consistency of $s$ is unique.
		
		\item[(b)] $s\,$ is b-consistent with $\,\cal T,\,$ if and only if $\,{\cal N}_s\,$ is a cut of $\,\cal T.$
		
		\item[(c)] $s\,$ is c-consistent with $\,\cal T,\,$ if and only if $\,{\cal N}_s\,$ consists of disjoint nodes of $\,\cal T.$
		
		\item[(d)] $s\,$ is d-consistent with $\,\cal T,\,$ if and only if $\,{\cal N}_s\,$ consists of (possibly overlapping) nodes of $\,\cal T.$ 
\end{enumerate}
\end{property}

\begin{lemma}\label{lemma:consist:equiv}
	 Every segmentation that is consistent with a hierarchy in one of the types b/c/d is also consistent with the hierarchy in the other two types.
\end{lemma}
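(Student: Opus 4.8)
The plan is to close a cycle of implications rather than prove all three equivalences separately. Remark~\ref{remark:consistencies} already gives that, for a fixed ${\cal N}_s$, b-consistency implies c-consistency implies d-consistency; reading this at the level of segmentations yields $b \Rightarrow c \Rightarrow d$ (if $s$ is b-consistent via some ${\cal N}_s$, the same ${\cal N}_s$ witnesses c- and then d-consistency). Hence it suffices to prove the one remaining implication $d \Rightarrow b$: together with the chain above, $d \Rightarrow b \Rightarrow c \Rightarrow d$ forms a cycle, making b, c, and d mutually equivalent.

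To prove $d \Rightarrow b$, the key fact I would exploit is that the leaves $\cal L$ form a partition of $I$ and every node satisfies $N = \sbullet {\cal L}^N$, so each node is a union of leaves. Consequently the family of unions of leaves contains ${\cal T}$ and, since the leaves are its atoms, is closed under union, intersection, set-difference, and complementation relative to $I$. By the definition of d-consistency, every segment of a d-consistent $s$ is either built from nodes of ${\cal N}_s$ by unions and differences, or is the complement segment $I \setminus \sbullet{\cal N}_s$; in both cases it is a Boolean combination of nodes and therefore a union of leaves. In particular every segment admits a ${\cal T}$-partition (at worst by its own leaves), so by Lemma~\ref{lemma:coarsest-T-partition} it has a well-defined coarsest ${\cal T}$-partition.

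It then remains to assemble a single witness for b-consistency. Because the segments of $s$ are pairwise disjoint and cover $I$, the union of the ${\cal T}$-partitions of all segments is a set of disjoint nodes whose union is $I$ --- that is, a ${\cal T}$-partition of $I$, equivalently a cut of ${\cal T}$. Taking this cut as the witnessing subset, each segment is by construction a union of its nodes, so $s$ is b-consistent. The step I expect to be the main obstacle is ruling out that the differences and node overlaps allowed by d-consistency produce a segment ``cutting through'' a leaf; this is precisely what the closure of the unions-of-leaves family forbids, and the only case needing separate attention is the complement segment, for which one notes that $\sbullet{\cal N}_s$ is a union of nodes and hence its complement in the root $I$ is again a union of leaves.
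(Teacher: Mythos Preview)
Your proof is correct and rests on the same key observation as the paper: every node is a union of leaves, so any Boolean combination of nodes (union, set-difference, complement) is again a union of leaves and hence admits a $\cal T$-partition. The only difference is organizational --- the paper closes the cycle in two steps ($d\Rightarrow c$ via disjoint nodes, then $c\Rightarrow b$ by completing to a cut), whereas you go $d\Rightarrow b$ in one step by directly taking the union of the segments' $\cal T$-partitions as the witnessing cut; both arguments are equally valid and of the same substance.
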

\begin{proof}[\bf Proof sketch:]
	Following Remark \ref{remark:consistencies}, consistency of a segmentation according to one type implies its consistency according to the more general types. The converse is also true.
	
	Consider a d-consistent segmentation $\,s.$ Recall that every node in $\cal T$ is a union of disjoint nodes of the initial partition $\cal L.$ A set-difference of nested nodes is still a union of nodes of $\,\cal L$ (which are a $\cal T$-partition of this set-difference); see Figure \ref{fig:hierarchy-disj-compl}(a-c). Hence, there is a subset ${\cal N}_s$ consisting of disjoint nodes. By Property \ref{prop:consistencies}, $s\,$ is also c-consistent.
	
	A subset ${\cal N}_s$ consisting of disjoint nodes can be completed to a partition of $\,I$ by adding some $\cal T$-partition of $\,I \backslash \sbullet{\cal N}_s\,;$ see Figure \ref{fig:hierarchy-disj-compl}(d-e). Hence, there is another subset ${\cal N}_s$ that is a cut of the hierarchy. By Property \ref{prop:consistencies}, $s$ is also b-consistent.
\end{proof}

Lemma \ref{lemma:consist:equiv} states the somewhat surprising result that  b/c/d consistencies are equivalent. 
Thus, the set of segmentations consistent with $\,\cal T$, using either b-, c-, or d-  consistencies, is common. Denote this set by $\cal S$. Note that the set of a-consistent segmentations, ${\cal S}_1 \!\subset\! \cal S$, is smaller. 
The consistencies may differ significantly, however, in the ${\cal N}_s\,$ subsets. Let  $\,{\cal N}_s^a\,$ (resp. ${\cal N}_s^b\, , \,{\cal N}_s^c\, , \,{\cal N}_s^d\,$) be the smallest subset such that $\,s \!\in\! \cal S$ is a- (resp. b-, c-, d-) consistent with this subset

\begin{figure*}[htb]
	\def\svgwidth{1\textwidth}
	\centering{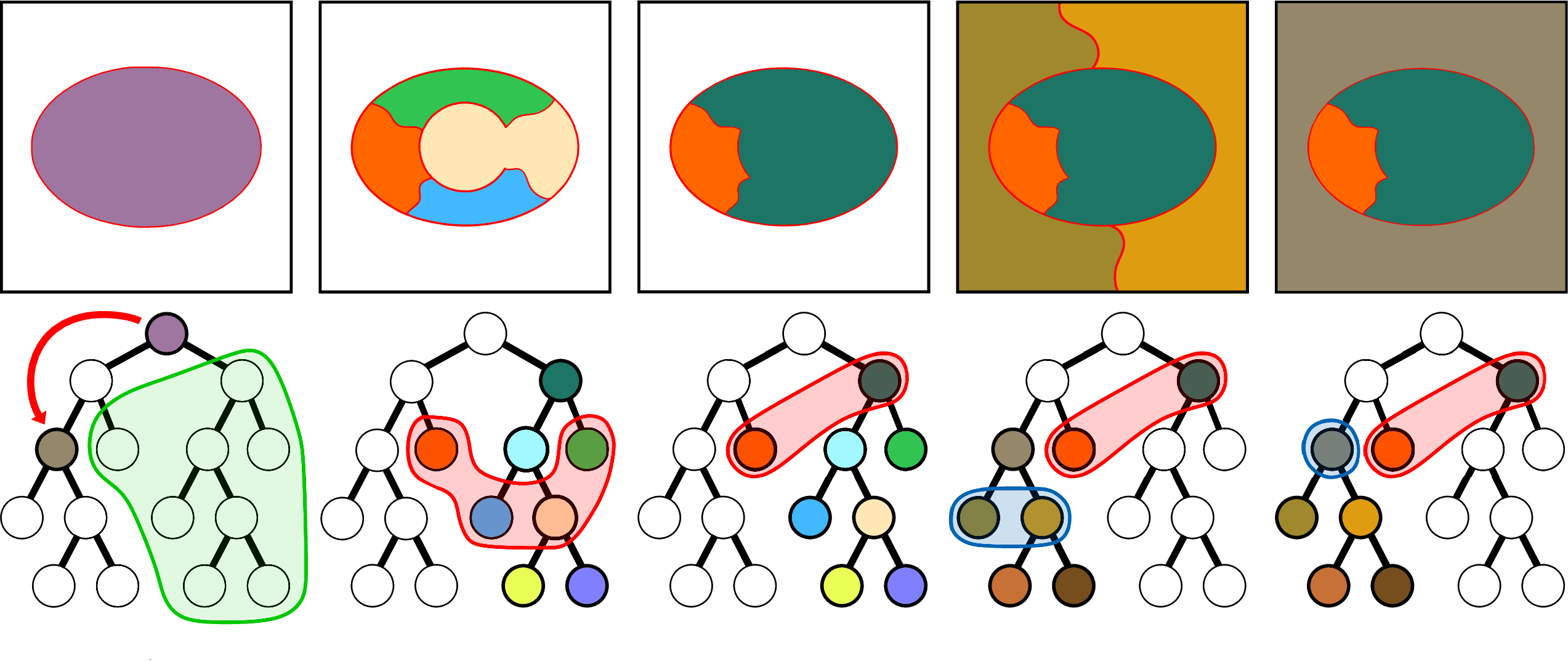}
	\caption{A few examples illustrating how regions may be represented by a hierarchy. We use the hierarchy described in Figure \ref{fig:hierarchy-examp}. {\bf (a)} Set-difference of nodes: $N_{15} \!\backslash N_{11}$. The nodes covering the part of $\,I$ that are included in this set-difference are shaded green. {\bf (b)} A possible $\,\cal T$-partition of $\,N_{15} \!\backslash N_{11}\,$ is shaded red. {\bf (c)} The unique coarsest $\,\cal T$-partition of $\,N_{15} \!\backslash N_{11}\,$, which is $\{N_4, N_{14}\}$, is shaded red. {\bf (d)} A possible $\cal T$-partition of the complement $I \backslash (N_4 \! \cup \! N_{14})$ is shaded blue. {\bf (e)} The unique coarsest $\cal T$-partition of the complement $I \backslash (N_4 \!\cup\! N_{14}),$ which is $\{N_{11}\},$ is shaded blue.%
	}\label{fig:hierarchy-disj-compl}
\end{figure*}

The proof of the following lemma is straightforward. 
\begin{lemma}\label{lemma:consist:inequal}
	 Let $\,s \! \in \! \cal S\,$, then  $|{\cal N}_s^b| \, \geqslant |{\cal N}_s^c| \, \geqslant |{\cal N}_s^d|$. Furthermore, $\,{\cal N}_s^b\,$ is unique, but not necessarily $\,{\cal N}_s^c , \,{\cal N}_s^d.$
	Moreover, if $s$ is a-consistent, then 
	${\cal N}_s^a= {\cal N}_s^b$.
\end{lemma}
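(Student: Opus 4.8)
The plan is to reduce all three assertions to a single structural description of the minimal witnesses in terms of coarsest $\cal T$-partitions, and then read off the claims. Write $s=\{s_1,\dots,s_m\}$ for the segments of $s$, and let $P_i$ denote the coarsest $\cal T$-partition of $s_i$; this exists for every $i$ because $s\in\cal S$ is in particular b-consistent, so each segment is a union of nodes of $\cal T$. The inequalities will come from Remark~\ref{remark:consistencies} together with minimality; the uniqueness of ${\cal N}_s^b$ and the identity ${\cal N}_s^a={\cal N}_s^b$ will come from identifying ${\cal N}_s^b$ with $\bigcup_i P_i$ and invoking part~(\textit{ii}) of Lemma~\ref{lemma:coarsest-T-partition}; the failure of uniqueness of ${\cal N}_s^c,{\cal N}_s^d$ will come from exhibiting ties and alternative difference expressions.

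First I would dispose of the inequalities, which require almost nothing. By Remark~\ref{remark:consistencies}, any subset witnessing b-consistency of $s$ also witnesses c-consistency of $s$; in particular ${\cal N}_s^b$ is a valid c-witness, and since ${\cal N}_s^c$ is by definition the smallest c-witness we get $|{\cal N}_s^c|\leqslant|{\cal N}_s^b|$. Applying the same remark to the pair c/d shows that every c-witness is a d-witness, so $|{\cal N}_s^d|\leqslant|{\cal N}_s^c|$, and the chain $|{\cal N}_s^b|\geqslant|{\cal N}_s^c|\geqslant|{\cal N}_s^d|$ follows at once.

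The core of the argument, and the step I expect to be the main obstacle, is the exact description of ${\cal N}_s^b$. By Property~\ref{prop:consistencies}(b) a b-witness ${\cal N}_s$ is a cut, hence a partition of $I$ into disjoint nodes in which each $s_i$ is a union of cut nodes. Since both the cut and $s$ partition $I$, every cut node lies inside exactly one segment, so the cut nodes contained in $s_i$ form precisely a $\cal T$-partition of $s_i$, and ${\cal N}_s$ splits as the disjoint union of these $m$ per-segment groups. Because the groups are independent and the total cardinality is their sum, minimizing $|{\cal N}_s|$ amounts to choosing in each segment a $\cal T$-partition of least cardinality, i.e. the coarsest one $P_i$. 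Hence ${\cal N}_s^b=\bigcup_i P_i$, and since each $P_i$ is unique by part~(\textit{ii}) of Lemma~\ref{lemma:coarsest-T-partition}, ${\cal N}_s^b$ is unique. The delicate point here is justifying that an arbitrary b-witness really does decompose segment-by-segment, so that the cardinality minimization separates and is solved independently by the (unique) coarsest partition of each segment.

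The remaining claims then follow quickly. If $s$ is a-consistent, Property~\ref{prop:consistencies}(a) says each $s_i$ is itself a node, whose coarsest $\cal T$-partition is $P_i=\{s_i\}$; substituting into the formula gives ${\cal N}_s^b=\{s_1,\dots,s_m\}={\cal N}_s^a$. For the non-uniqueness, a parallel reading of Property~\ref{prop:consistencies}(c) shows that a minimal c-witness is obtained from $\bigcup_i P_i$ by deleting one group $P_j$ and absorbing $s_j$ into the complement $I\backslash\sbullet{\cal N}_s$, with $j$ chosen so that $|P_j|$ is maximal; when two segments tie for the largest coarsest partition, the two resulting ${\cal N}_s^c$ are distinct yet equally small, so ${\cal N}_s^c$ need not be unique. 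For ${\cal N}_s^d$ one additionally uses that a single segment may admit several equally short union/difference expressions (as with the set-differences in Figure~\ref{fig:hierarchy-disj-compl}), yielding distinct minimal d-witnesses; these explicit witnesses, drawn from the examples already in the text, supply the required counterexamples and finish the proof.
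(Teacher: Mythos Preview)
Your proposal is correct and in fact goes well beyond what the paper does: the paper's entire proof of this lemma is the single line ``The proof is straightforward.'' Your argument---deriving the inequalities from Remark~\ref{remark:consistencies} by minimality, identifying ${\cal N}_s^b$ with the union of the per-segment coarsest $\cal T$-partitions and invoking Lemma~\ref{lemma:coarsest-T-partition}(ii) for uniqueness, and reading off ${\cal N}_s^a={\cal N}_s^b$ and the non-uniqueness examples---is precisely the kind of unpacking the paper is gesturing at, so there is no meaningful difference in approach to discuss.
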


Note that a segmentation is consistent with $\,\cal T$ (in some consistency a/b/c/d), if and only if each of its segments is a union of nodes from $\,\cal T$; that is, there is a $\,\cal T$-partition for each segment.
For a segmentation $s \! \in \! \cal S,$ 
we refer to the union of the coarsest $\,\cal T$-partitions of the segments of $\,s$ ({\em i.e.}, ${\cal N}_s^b$) as {\em the coarsest cut of the hierarchy for} $\,s.$ 
Lemma \ref{lemma:consist:inequal} implies that for every $s \! \in \! \cal S$ there is a unique coarsest cut of the hierarchy. The converse is not true. A cut of a hierarchy can be the coarsest for several segmentations. For example, the same cut is the coarsest for different segmentations (a) and (b)  in Figure~\ref{fig:hierarchy-segm-examp}.


\begin{figure*}[htb]%
	\def\svgwidth{1\textwidth}
  \centering{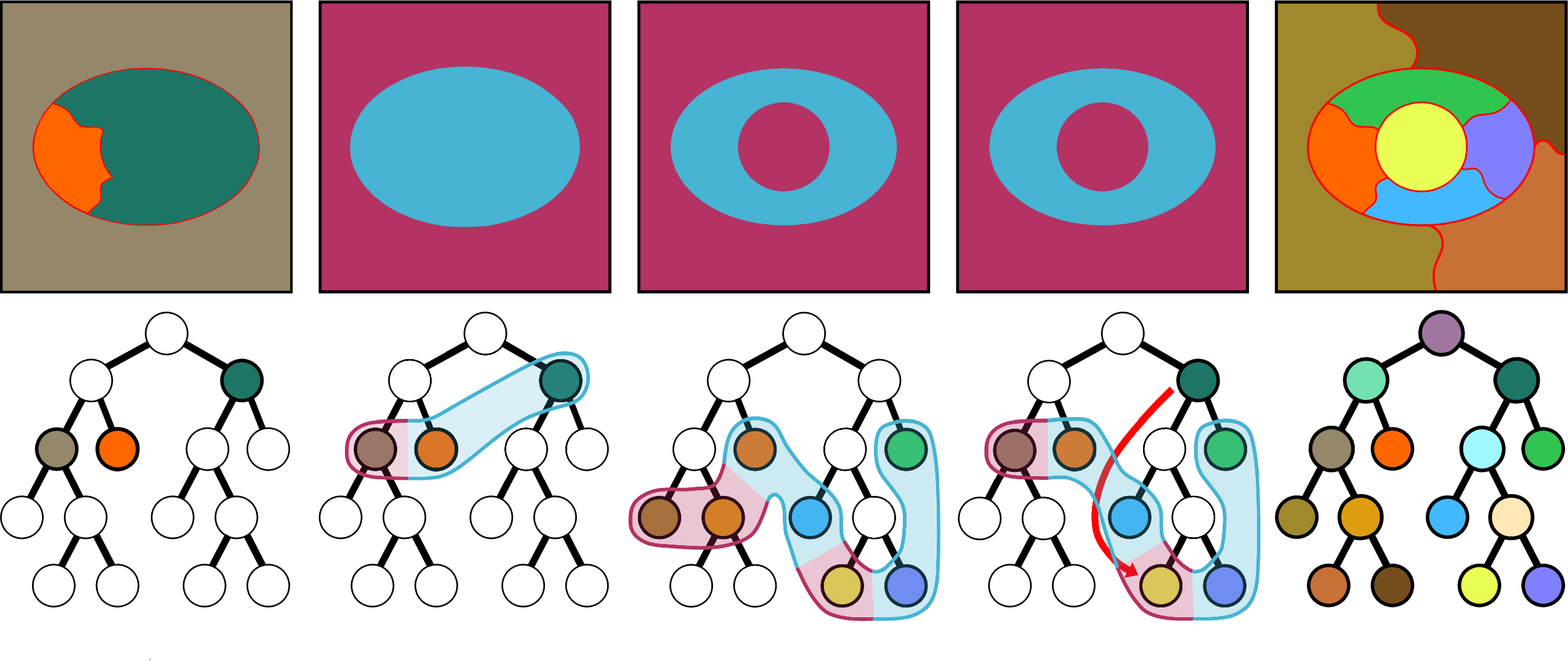}
	\caption{Examples of segmentations of various consistency types, all consistent with the hierarchy described in Figure \ref{fig:hierarchy-examp},  shown also in {\bf (e)}. All segmentations are specified by three nodes (although sometimes fewer nodes suffice). Note that a segment is not necessarily connected. Except for the a-consistency, the nodes in a cut of the hierarchy specifying each segmentation are shaded with the colors of the segments in which they are included. {\bf (a)} An a-consistent segmentation, into three segments, specified by a cut of the hierarchy: $\{N_4, N_{11}, N_{14}\}$. {\bf (b)} A segmentation that is b-consistent with the same cut of the hierarchy as in (a). The nodes $N_4$ and $N_{14}$ are merged into one segment. {\bf (c)} A segmentation, denoted $s$, that is c-consistent with the subset ${\cal N}_s = \{N_1, N_6, N_9\}.$ The burgundy segment is $\,\sbullet{\cal N}_s\,$ and the turquoise segment is the complement $\,I \backslash \sbullet{\cal N}_s.$ Note that the $\,\cal T$-partition of the burgundy segment is non-coarsest. Hence, $\,{\cal N}_s \!\ne\! {\cal N}_s^c\,$ and the specified cut of the hierarchy is non-coarsest for $s.$ The minimal number of disjoint nodes required to cover the turquoise segment is four, while it is only two for the burgundy segment; hence, ${\cal N}_s^c \!=\! \{N_6, N_{11}\}.$ {\bf (d)} The same segmentation $s\,$ is d-consistent with the subset $\,{\cal N}_s = \{N_4, N_6, N_{14}\}.$ The turquoise segment is specified by $N_4 \cup \{N_{14} \!\backslash N_6\},$ the burgundy segment is the rest of the image: $I \backslash \sbullet{\cal N}_s.$ The specified cut of the hierarchy is the coarsest for $s.$ Note that representing this segment with ${\cal N}_s^c \!=\! \{N_6, N_{11}\},$ as specified above, is valid and more node-economical.
	}\label{fig:hierarchy-segm-examp}
\end{figure*}


\section{Previous work}\label{sec:previouswork}

The task considered here (and in  \cite{ge2006image, pont2012upper}) is to estimate the limitation associated with hierarchy-based segmentation. 
That is, to find the best $s \!\in\! \cal S,$ maximizing the quality ${\cal M}(s)$, that is consistent with the hierarchy for a limited size of $\,{\cal N}_s,$ $\,|{\cal N}_s| \,\leqslant k.$ 
This upper-bound of the segmentation quality is  a function of the consistency type and $k$. We refer to the segmentation maximizing the quality as a/b/c/d-optimal.

First, we  emphasize again that this task is different from the common evaluation of hierarchy-dependent segmentations, which provides precision recall curves and chooses the best segmentation from them; see, {\em e.g.},  \cite{arbelaez2011contour, pont2016supervised, perret2017evaluation}.  This approach considers only the easily enumerable set of segmentations associated with horizontal cuts, which are parameterized by a single  scalar parameter. Here, on the other hand, we find the best possible segmentation from much more general segmentation sets, and provide an upper bound on its quality measure.
The best segmentations from these larger sets have often significantly better quality; see \cite{perret2017evaluation}. 

Only a few papers address such upper bounds. 
Most of the upper bounds were derived for local measures. A local 
measure ${\cal M}(s)$ of a segmentation $s$ may be written as a sum of functions defined over the components of the cut defining $s$. 

Local measures are considered in \cite{pont2012upper}. An elegant dynamic programming algorithm provides upper bounds on these measures for segmentations that are  a-consistent  with a given  BPT hierarchy.
Unlike that work, we consider binary segmentation, for which the a-consistent segmentation is trivial. We extend this work by working with b,c, and d-consistent segmentation and by optimizing each one for a non-local measure: the Jaccard index.

The boundary-based $F_b$ measure \cite{martin2003empirical} was considered in \cite{pont2012supervised}. A method to evaluate the a-consistency performance of a BPT hierarchy is proposed.
The optimization was modeled as a Linear Fractional Combinatorial Optimization problem \cite{radzik1992newton} and was solved 
for every possible size of a cut of a hierarchy (from $1$ till $|\cal L|$). This process is computationally expensive, and therefore is limited to moderate size hierarchies.

Extending those previous works, a hierarchy evaluation framework was proposed \cite{perret2017evaluation}. It includes  various types  of upper bounds corresponding to boundaries and regions, and further extends the analysis to  supervised, markers based, segmentation.
More recently, the study described in \cite{randrianasoa2021supervised} introduced some new measures that quantify the match between hierarchy and ground truth. Both papers \cite{perret2017evaluation,randrianasoa2021supervised}  address neither the exact optimization of the Jaccard index nor the advanced (b, c, d) consistencies.  

\section{A co-optimality tool for optimization}\label{sec:ourapproach}

Given a quality measure $\cal M$ over a set $\cal S,$ we want to find $s \! \in \! \cal S$ with the best score, ${\cal M}(s).$ Optimizing the quality measures over all possible node subsets may be computationally hard. One approach could be to optimize an {\em equivalent measure} ${\cal Q}(s)$ instead. Measures are equivalent if they rank objects identically. For example, the Jaccard index and the object-based $F$-measure are equivalent  \cite{pont2016supervised} because they are functionally related by a monotonically increasing function. 

An equivalent measure $\cal Q$ may, however, be as difficult to optimize. Recalling that we are interested only in the maximum of $\cal M$ and not in the ranking of all subsets, we may turn to a weaker, easier-to-optimize form of equivalence. 

\begin{definition}
Let $\,{\cal S}_{_{\!\!{\cal M}}}\! \subset \cal S$ be the subset of the elements optimizing $\cal M.$ We refer to measures $\cal M$ and $\cal Q\,$ as {\em co-optimal} over $\cal S,$ if $\,{\cal S}_{_{\!\!{\cal M}}}\! = {\cal S}_{_{\!\!{\cal Q}}}.$ 
\end{definition}

We now propose an optimization approach that is valid for general finite sets $\cal S$, including but not limited to hierarchical segmentations. 
Algorithm \ref{sch:main} uses a family of measures  $\{{\cal Q}_{_{\omega}}\},\omega \!\in\! [0,1]$ over $\,\cal S$. It works by iteratively alternating between assigning values to $\omega$ and optimizing ${\cal Q}_{_{\omega}}(s)$. As Theorem 
\ref{sch:main:theorem} below shows, under some conditions on the family $\{{\cal Q}_{_{\omega}}\}$, the algorithm returns the segmentation that maximizes the quality measure $\cal M$, and the corresponding maximal value $\widehat{\cal M}$.

	\begin{algorithm2e}[tb]
		\DontPrintSemicolon
		\KwData{%
		A quality measure $\cal M$,  a set $\cal S$, and a family of measures $\{{\cal Q}_{_{\omega}}\}$}
		\KwData{An initial  
		$\,\omega_{_0} \! \in [0,1] %
		\quad$
		}
		\KwResult{An element $\,{s}_{_{\!\omega}}$ }
		$\omega \,=\, {\cal M}(\,\argmax \limits_{\,\,s \, \in \, \cal S} \, {\cal Q}_{_{\omega_{_{_0}}}}\!(s)\,)$\;
		\Do{$\,\omega > \omega_{_0}$}{
			${s}_{_{\!\omega}} = \,\argmax \limits_{\,\,s \, \in \, \cal S} \, {\cal Q}_{_{\omega}}(s)$\;
			$\omega_{_0} = \,\omega$\;
			$\omega \,\, = \,{\cal M}({s}_{_{\!\omega}})$
		}
		\Return $\,{s}_{_{\!\omega}}$
	\caption{\label{sch:main} Generic optimization scheme}

	\end{algorithm2e}

\begin{theorem}\label{sch:main:theorem}

	Let $\cal M$ be a quality measure over a finite set $\,\cal S,$ receiving its values in $[0,1]\,.$
	Let $\widehat{\cal M}$ be the (unknown) maximal value of $\cal M$ over $\cal S$.
   Let $\{{\cal Q}_{_{\omega}}\},\omega \!\in\! [0,1]$ be a family of measures over $\,\cal S,$ satisfying the following conditions:
	\begin{enumerate}
	    \item ${\cal Q}_{\omega = \widehat{\cal M}}\,\,\text{ and }\,\cal M\,$ are co-optimal measures over $\cal S$. \label{sch:main:assump_a}
	    \item
	    For $0 \leqslant \omega < \widehat{\cal M}$  and $ s' \!\in {\cal S},$
	    if there is $\,s \in {\cal S}_{_{\!\!{\cal M}}}\, \text{s.t.}\,{\cal Q}_{_{\omega}}(s) \leqslant {\cal Q}_{_{\omega}}(s')$, then ${\cal M}(s') > \omega$.
	    \label{sch:main:assump_b} 
	\end{enumerate}
	\noindent
	Then Algorithm \ref{sch:main}  %
	returns $\,s \in {\cal S}_{_{\!\!{\cal M}}}$ after a finite number of iterations.

\end{theorem}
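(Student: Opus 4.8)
The plan is to follow the scalar $\omega$ across a run of Algorithm~\ref{sch:main} and to prove two things separately: the loop halts, and the element returned at halting lies in the set ${\cal S}_{_{\!\!{\cal M}}}$ of $\cal M$-maximizers. First I would fix notation for the run. Write $\omega_{(1)} = {\cal M}(\argmax_{s\in{\cal S}} {\cal Q}_{_{\omega_{_0}}}(s))$ for the value produced by the initial assignment, and, for $t \ge 1$, let $s_{(t)} = \argmax_{s\in{\cal S}} {\cal Q}_{_{\omega_{(t)}}}(s)$ be the element computed in the $t$-th pass of the \texttt{do}-body and $\omega_{(t+1)} = {\cal M}(s_{(t)})$ the value it then assigns to $\omega$. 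Since the body overwrites $\omega_{_0}$ by the current $\omega$ before recomputing $\omega$, the \texttt{while} test at the end of pass $t$ is exactly $\omega_{(t+1)} > \omega_{(t)}$; hence the loop proceeds only while $\omega_{(1)}, \omega_{(2)}, \dots$ is strictly increasing.

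For termination I would note that each $\omega_{(t)}$ with $t \ge 1$ equals ${\cal M}(s)$ for some $s \in {\cal S}$, so it belongs to the finite set $\{{\cal M}(s) : s \in {\cal S}\}$ (finite because $\cal S$ is). A strictly increasing sequence inside a finite set has finitely many terms, so after some pass $T$ the test fails, i.e.\ $\omega_{(T+1)} \le \omega_{(T)}$, and the algorithm returns $s_{_{\!\omega}} = s_{(T)}$.

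For correctness I would argue on the value of $\omega_{(T)}$, first observing that $\omega_{(T)} \le \widehat{\cal M}$ because $\omega_{(T)}$ is itself an $\cal M$-value. Suppose, toward a contradiction, that $\omega_{(T)} < \widehat{\cal M}$. The returned $s_{(T)}$ is a global maximizer of ${\cal Q}_{_{\omega_{(T)}}}$, so for \emph{any} $s \in {\cal S}_{_{\!\!{\cal M}}}$ --- a nonempty set, since $\cal M$ attains its maximum on the finite $\cal S$ --- we have ${\cal Q}_{_{\omega_{(T)}}}(s) \le {\cal Q}_{_{\omega_{(T)}}}(s_{(T)})$. This is precisely the premise of Condition~\ref{sch:main:assump_b} with $\omega = \omega_{(T)}$ and $s' = s_{(T)}$, whose conclusion gives ${\cal M}(s_{(T)}) > \omega_{(T)}$, i.e.\ $\omega_{(T+1)} > \omega_{(T)}$ --- contradicting the loop exit. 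Hence $\omega_{(T)} = \widehat{\cal M}$, and Condition~\ref{sch:main:assump_a} applies: ${\cal Q}_{\widehat{\cal M}}$ and $\cal M$ are co-optimal, so their maximizer sets coincide; since $s_{(T)} = \argmax_{s} {\cal Q}_{\widehat{\cal M}}(s)$ is a maximizer of ${\cal Q}_{\widehat{\cal M}}$, it lies in ${\cal S}_{_{\!\!{\cal M}}}$, as claimed.

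The one delicate point I expect is reading the \texttt{do}--\texttt{while} control flow correctly, so that the monotone quantity tracked is the $\cal M$-score of successive ${\cal Q}_{_{\omega}}$-maximizers and not $\omega$ in isolation; once that is pinned down, the substantive step is the single application of Condition~\ref{sch:main:assump_b}, where the key move is that the returned $s'$ maximizes ${\cal Q}_{_{\omega}}$, so the premise ${\cal Q}_{_{\omega}}(s) \le {\cal Q}_{_{\omega}}(s')$ holds automatically against every $\cal M$-optimal $s$. The termination half I expect to be routine.
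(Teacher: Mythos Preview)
Your proof is correct and follows essentially the same approach as the paper's: use Condition~\ref{sch:main:assump_b} to show that $\omega$ strictly increases while below $\widehat{\cal M}$, use finiteness of $\cal S$ for termination, and invoke Condition~\ref{sch:main:assump_a} (co-optimality) once $\omega = \widehat{\cal M}$. Your organization is in fact slightly cleaner than the paper's---the paper splits into cases on whether the initial $\omega_{_0}$ lies in $[0,\widehat{\cal M}]$ and separately argues a biconditional that $\omega$ stabilizes iff $\omega = \widehat{\cal M}$, whereas you avoid both by observing that $\omega_{(1)}$ (the value after line~1) is already an $\cal M$-value, hence automatically $\le \widehat{\cal M}$, and by proving termination first and then arguing directly on $\omega_{(T)}$.
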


\begin{proof}
Suppose that $\omega_0\in [0,\widehat{\cal M}]$. Then 
 the iterative scheme in each iteration finds $\,{s}_{_{\!\omega}} \!\in\! {\cal S}_{_{\!\!{\cal Q}_{_\omega}}}\!$ and specifies a new value for $\omega$ to be ${\cal M}({s}_{_{\!\omega}}).$  Condition\, 2
	is fulfilled trivially for $s' \!=\! {s}_{_{\!\omega}}$ since ${s}_{_{\!\omega}}$ maximizes ${\cal Q}_{_{\omega}}.$ Hence, $\,\omega < {\cal M}({s}_{_{\!\omega}})\,;$ {\em i.e.}, $\omega$ strictly increases from iteration to iteration while $\,\omega < \widehat{\cal M}\,.$ $\cal S$ is finite, hence, $\omega$ reaches $\widehat{\cal M}$ after a finite number of iterations. When that happens, $\,\widehat{s}_{\omega = \widehat{\cal M}} \in {\cal S}_{_{\!\!{\cal M}}}\,$ since, by condition  \ref{sch:main:assump_a}, ${\cal Q}_{\omega = \widehat{\cal M}}\,$ and ${\cal M}$ are co-optimal. The iterations stop when $\,\omega$ no longer increases. Hence, to prove the theorem, we show that $\,\omega\,$ does not change after it reaches $\widehat{\cal M}$.
	\begin{enumerate}
		\itemsep0pt%
		
		\item[$\pmb \implies$] Suppose that $\,\omega \!=\! \widehat{\cal M}\,.$ Since $\widehat{s}_{\omega = \widehat{\cal M}}$ maximizes both $\,{\cal M},\,{\cal Q}_{\omega = \widehat{\cal M}}\,\,,$ we have $\,\widehat{\cal M} = {\cal M}({s}_{_{\!\omega}}) \quad\Rightarrow\quad \omega = \widehat{\cal M} = {\cal M}({s}_{_{\!\omega}})\,.$
		
		\item[$\pmb \impliedby$] Conversely, suppose that $\,\omega \!=\! {\cal M}({s}_{_{\!\omega}})$, {\em i.e.}, $\omega$ does not change at line 5 of the algorithm. All values of $\,\omega\,$ specified in scheme \ref{sch:main} are values of $\cal M\,;$ hence, $\,\omega \!\leqslant\! \widehat{\cal M}\,.$ If $\,\omega \!<\! \widehat{\cal M}\,,$ then by condition \ref{sch:main:assump_b} $\,\omega \!<\! {\cal M}({s}_{_{\!\omega}})\,,$ which contradicts the current assumption. Hence, $\,\omega = \widehat{\cal M}\,.$
	\end{enumerate}
Suppose now that the condition required above,  $\omega_0\in [0,\widehat{\cal M}]$, is not satisfied ({\em i.e.}, $\omega_0 > \widehat{\cal M}$). Then, line 1 returns some  $\omega$ which must be lower than the maximal $\widehat{\cal M}$. Then the algorithm proceeds and reaches the optimum according to the proof above. 
\end{proof}

Given a quality measure $\,0 \!\leqslant\! {\cal M} \!\leqslant\! 1$ over $\cal S,$ we refer to a family $\{{\cal Q}_{_{\omega}}\} \,\omega \!\in\! [0,1]$ of measures over $\cal S,$ as a {\em family of auxiliary measures for} $\cal M$ if $\{{\cal Q}_{_{\omega}}\}$ contains at least one measure ${\cal Q}_{_{\!\omega'}}$ that is co-optimal with $\cal M$ over $\cal S,$ and there is some iterative process 
that finds ${\cal Q}_{_{\!\omega'}}$ from $\{{\cal Q}_{_{\omega}}\}.$ We refer to ${\cal Q}_{_{\!\omega'}}$ as a {\em co-optimal auxiliary measure}, and we refer to an algorithm that can optimize every member of $\{{\cal Q}_{_{\omega}}\}$ as an {\em auxiliary algorithm}.

In scheme \ref{sch:main}$,$ the  auxiliary algorithm is written in the most general form: $\argmax\,{\cal Q}_{_{\omega}}.$ 
In the next section, we provide a family of auxiliary measures and corresponding auxiliary algorithms, suitable for optimizing the Jaccard index, for different consistencies and constraints of the node set size.


\section{Optimizing the Jaccard index}
\label{sec:Jaccard}

After setting the framework and developing the necessary new optimization tool, we shall now turn to the main goal of this paper: Finding a tight upper bound on the obtainable Jaccard index.

\subsection{The Jaccard index}
\label{sec:ObjBased}

The Jaccard index (or the intersection over union measure) is a popular segmentation quality measure, applicable to a simple segmentation into two parts: foreground (or object) and background. 

Let $\,({\cal B}_{_{GT}}, {\cal F}_{_{GT}})\,$ and $\,({\cal B}_{s},{\cal F}_{s})\,$ be  two  foreground-background partitions corresponding to the ground-truth and a  segmentation $\,s \! \in \! \cal S\,$. 
The Jaccard index $J$ is given by:
\begin{equation}
	J(s)= \frac{|{\cal F}_{_{GT}}\cap {\cal F}_s|}{|{\cal F}_{_{GT}}\cup {\cal{F}}_s|}
\end{equation}

Given a hierarchy, we shall find, for each consistency and node subset size $|{\cal N}_s|$, the segmentation that maximizes the Jaccard index. 
This segmentation also maximizes the object-based
F-measure, as the two measures are equivalent~\cite{pont2016supervised}.

For two-part segmentation, only one segmentation is a-consistent with a BPT hierarchy: the two children of the root. We ignore this trivial case in the following discussion.

\subsection{Segmentation dimensions}\label{sec:SegDim}

Let ${\cal S}^{^{_2}} \!\!\subset\! \cal S$ be the subset of all possible 2-segment segmentations, consistent with the hierarchy. Denote the areas of the ground-truth parts by $B \! = \! |{\cal B}_{_{GT}}|\, , \,F \!=\! |{\cal F}_{_{GT}}|.$ Let $\,{\cal X}_s$ be one segment of a segmentation $\,s \! \in \! {\cal S}^{^{_2}}$. 
Considering this segment as foreground, denote its areas inside the ground-truth's parts by $(b_s \!=\! |{\cal X}_s \cap {\cal B}_{_{GT}}| \, , f_s \!=\! |{\cal X}_s \cap {\cal F}_{_{GT}}|).$ 
The Jaccard index is then
\begin{equation}\label{eq:Jacq}
	J(s) = \frac{|{\cal F}_{_{GT}}\cap {\cal X}_s|}{|{\cal F}_{_{GT}}\cup {\cal X}_s|} = \frac{f_s}{F + b_s} = \Psi(b_s\,,f_s).
\end{equation}

Alternatively, the foreground can be specified by the complementary segment $\,I \backslash {\cal X}_{s}\,$. The corresponding areas  inside the ground-truth's parts are $\,(B-b_s\,,F-f_s).$ The Jaccard index associated with this foreground is
\begin{equation}\label{eq:Jacq_comp}
	J^c(s) = \Psi(B - b_s\,,F - f_s) = \frac{F - f_s}{F + B - b_s}.
\end{equation}

Optimizing $J(s)$ for b-consistency provides a cut in tree  $\,{\cal N}_s$. Both $\,{\cal F}_s$ and $\,{\cal B}_s$ are unions of nodes of this cut. 
The c/d consistencies allow one segment to be specified as the complement of the other. The hierarchy may match better either ${\cal F}_{_{GT}}$ or ${\cal B}_{_{GT}}$. Thus, we optimize both $\,J(s)\,$ and $J^c(s)\,$ (for the same size of $\,{\cal N}_s$) and choose the better result.

The values $(b_s\,,f_s)$ are the main optimization variables. We refer to them as {\em segmentation dimensions}.

\subsection{Applying co-optimality for optimizing \texorpdfstring{$J(s)$}{J(s)}}\label{sec:J-index}

\subsubsection{Geometrical interpretation}

\begin{figure*}[tbh]
	\centering{\includegraphics[width=0.9\textwidth]{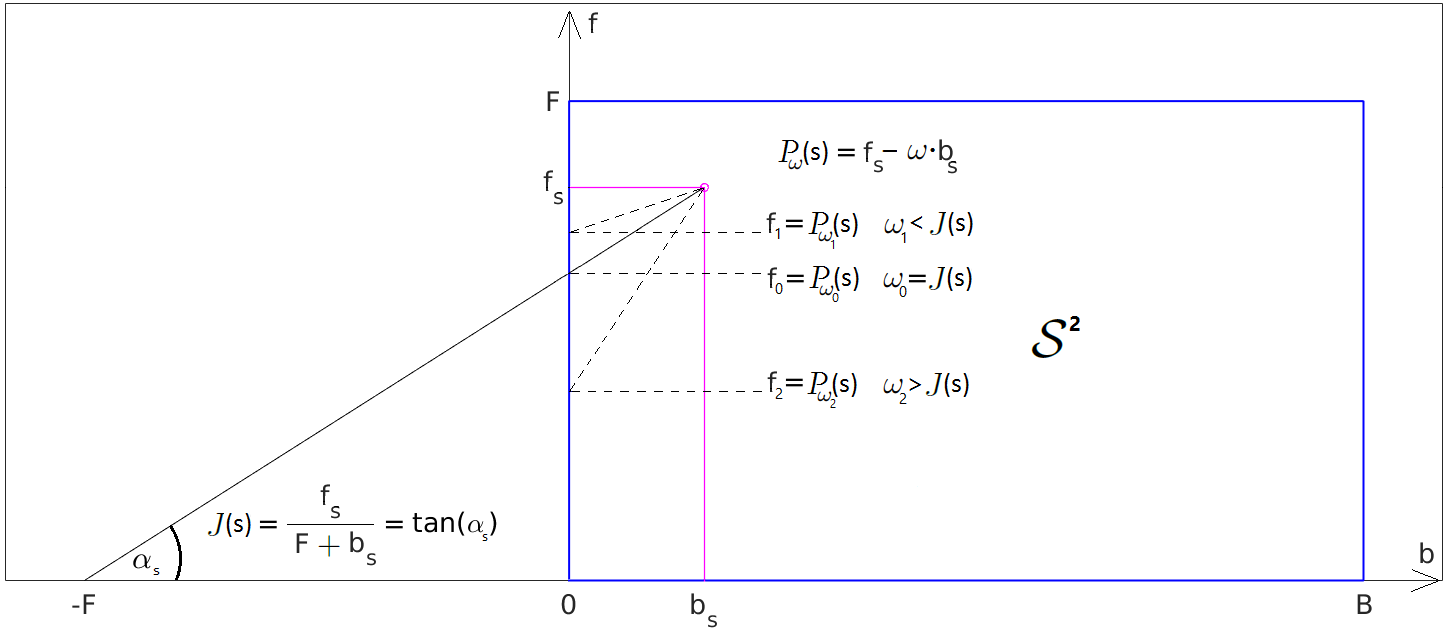}}
	\caption{A geometrical interpretation: the Jaccard index $J(s)$ is the tangent of the angle $\alpha_s$. }\label{fig:scheme_examp_1}
\end{figure*}

Our goal is to find 
\begin{equation}
   (\, \widehat{J} \, , \, \widehat{s} \, ) = \maxargmax \limits_{\!\!\! s \, \in \, {\cal S}^{^{_2}}} J(s).
\end{equation}
A key idea is to observe that the $J(s)$ value may be interpreted geometrically using the graph of segmentation dimensions $(b,f)$.
Selecting the segment $\,{\cal X}_s\,,$ every segmentation $\,s \!\in\! {\cal S}^{^{_2}}\,$ corresponds to a point $(b_s, f_s)$ inside the rectangle $(0,B)$\texttimes$(0,F).\,\,J(s)$ is $tan(\alpha_s),$ where $\alpha_s$ is the angle between the $\,b\,$ axis and the line connecting the point $(b_s, f_s)$ with the point (-$F, \,0$)\,; see Figure \ref{fig:scheme_examp_1}. The geometry implies that $tan(\alpha_s) \in [0,1]$, 
consistently with $tan(\alpha_s)=J(s) \in [0,1]$.

\subsubsection{A family of auxiliary measures}
\label{sec:cooptAuxJ}
For every $\omega \in [0, 1]$, let 
\begin{equation}
\label{auxuliary_meas}
    P_{_{\!\omega}}(s) = f_s - \omega \cdot b_s 
\end{equation}
be a measure over ${\cal S}^2$. 
Note that, geometrically, $P_{_{\!\omega}}(s)$ is the oblique projection at the $\arctan(\omega)$ angle of point $(b_s, f_s)$ onto the $f$ axis. 
The following two observations imply that $J(s)$ and the projection (at $\arctan(\widehat{J}\,)$ angle) $P_{_{\!\widehat J}}(s)$ are co-optimal measures. 

\begin{enumerate}%
	\itemsep2pt%
	
	\item $J(s)\,$ and $\,P_{_{\!\!J(s)\!}}(s)\,$ are equivalent measures.
	
	\item $\,P_{_{\!\!J(s)\!}}(s)\,$ and $\,P_{_{\!\widehat J}}(s)$ are co-optimal measures.
	
\end{enumerate}

The first observation is clear: ranking the elements of ${\cal S}^{^{_2}}$ by $J(s) = tan(\alpha_s)$ is  equivalent to ranking them by their projection at angle $\,\alpha_s\,,$ {\em i.e.}, by $P_{_{\!\!J(s)\!}}(s)$ ; see Figure \ref{fig:scheme_examp_1}.

The second observation states that there is a constant angle $\arctan(\omega)$ with $\omega = \widehat{J} $(not depending on $s$), such that  the projection $P_{_{\!\omega}}(s) $ at this angle and $P_{_{\!\!J(s)\!}}(s)$ are co-optimal. By the first observation, $P_{_{\!\!J(s)\!}}(s)$ is maximized by  $\widehat{s}$. Every non-optimal segmentation 
corresponds to a point below the line $[(-F,0)-(b_{\widehat s},f_{\widehat s})]$ and its constant angle projection satisfies $P_{_{\!\omega}}(s) < P_{_{\!\omega}}(\widehat{s})$. 
$P_{_{\!\omega}}(s)$ 
is maximized only by points
lying on this line, as is also the case with  $P_{_{\!\!J(s)\!}}(s)$.

Thanks to these two observations, the family $\{P_\omega\}$ is a family of auxiliary measures for the Jaccard index. 
The optimization process (Algorithm \ref{sch:main}) maximizes this auxiliary measure in every iteration: 
\begin{equation}
    (\, \widehat{P}_{_{\!\omega}} \, , \, \widehat{s}_{_{\!\omega}} \, ) = \maxargmax \limits_{\!\!\! s \, \in \, {\cal S}^{^{_2}}} P_{_{\!\omega}}(s)
\end{equation}

Note that $P_{\omega}(s)$ is linear in $(b_s,f_s)\,,$  which simplifies its maximization. 
To use scheme \ref{sch:main} to find $\widehat{s}$ (and $\,\widehat{J}\,),$ the second condition of Theorem \ref{sch:main:theorem}, which guarantees that $\omega$ strictly increases at every iteration while $\omega \!<\! \widehat{J}$, must be met as well.

Figure \ref{fig:scheme_examp_3} geometrically proves this property. Indeed, let $\omega \!\in\! [0,\widehat{J})$ and $\,\widehat{s},s' \!\in\! {\cal S}^{^{_2}}$ such that $\,P_{_{\!\omega}}(\widehat{s}) \!\leqslant\! P_{_{\!\omega}}(s').$ Observe that the angle $\,\alpha_{s'}$ must be larger than the projection angle of $\,P_{_{\!\omega}}\,,$ {\em i.e.}, $\arctan(\omega).$ The detailed proof is left to the reader. 
\begin{figure*}[h]
	\centering{\includegraphics[width=0.9\textwidth]{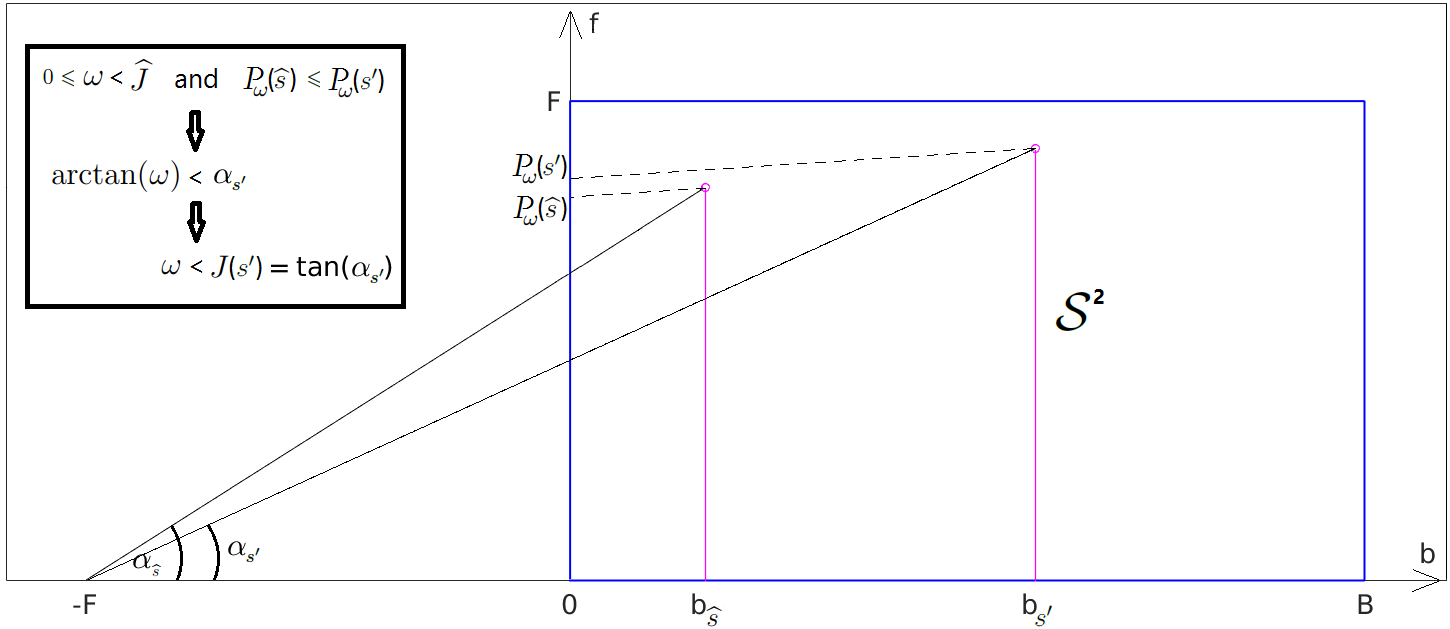}}
	\caption{An illustration showing that $\,\omega\,$ strictly increases from iteration to iteration, while $\omega < \widehat{J}.$
	}\label{fig:scheme_examp_3}
\end{figure*}
Therefore, by Theorem \ref{sch:main:theorem}  
\begin{theorem}\label{theorem:Jaux}
	
	For ${\cal M} \!=\! J$, $\{{\cal Q}_{_{\omega}}\} \!=\! \{P_{_{\!\omega}}\}$, and $\omega \!\in\! [0,1]$, scheme \ref{sch:main} (starting from $\omega_0\in[0,\widehat{J}]$) returns the best segmentation $\,\widehat s\,$ after a finite number of iterations.
\end{theorem}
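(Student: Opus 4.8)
The plan is to derive Theorem \ref{theorem:Jaux} as a direct instance of the generic convergence result, Theorem \ref{sch:main:theorem}, under the identifications ${\cal M} = J$ and ${\cal Q}_{_{\!\omega}} = P_{_{\!\omega}}$ over the finite set ${\cal S}^{^{_2}}$. The preliminary step is to confirm that the abstract hypotheses on the setting hold: ${\cal S}^{^{_2}}$ is finite because $\cal T$ has finitely many nodes and hence admits finitely many consistent two-segment segmentations, and $J$ receives its values in $[0,1]$ by the geometric fact that $J(s) = \tan(\alpha_s) \in [0,1]$ (Figure \ref{fig:scheme_examp_1}). With this, $\widehat{\cal M} = \widehat{J}$, and it remains only to verify the two numbered conditions of Theorem \ref{sch:main:theorem}.

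For condition \ref{sch:main:assump_a} I must show that $P_{_{\!\widehat J}}$ and $J$ are co-optimal over ${\cal S}^{^{_2}}$. This is exactly the statement assembled from the two observations of Section \ref{sec:cooptAuxJ}: the first observation gives that $J(s)$ and $P_{_{\!\!J(s)\!}}(s)$ are equivalent (both monotone in the angle $\alpha_s$), so they have identical maximizers, while the second observation gives that $P_{_{\!\!J(s)\!}}(s)$ and $P_{_{\!\widehat J}}(s)$ are co-optimal. Composing these two facts (transitivity of ``sharing the same optimizer set'') yields that $J$ and $P_{_{\!\widehat J}}$ are co-optimal, which is precisely condition \ref{sch:main:assump_a}.

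For condition \ref{sch:main:assump_b} I must show that for $0 \leqslant \omega < \widehat{J}$ and any $s' \! \in {\cal S}^{^{_2}}$, the existence of a maximizer $\widehat s$ of $J$ with $P_{_{\!\omega}}(\widehat s) \leqslant P_{_{\!\omega}}(s')$ forces $J(s') > \omega$. I would argue this algebraically from the oblique-projection picture of Figure \ref{fig:scheme_examp_3}. The key identity is obtained by writing $\widehat J = f_{\widehat s}/(F + b_{\widehat s})$ and substituting into $P_{_{\!\omega}}(\widehat s) = f_{\widehat s} - \omega b_{\widehat s}$, giving $P_{_{\!\omega}}(\widehat s) = \widehat J F + (\widehat J - \omega)\, b_{\widehat s} \geqslant \widehat J F > \omega F$, where the last strict inequality uses $\omega < \widehat J$ and $F > 0$. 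Then $P_{_{\!\omega}}(s') \geqslant P_{_{\!\omega}}(\widehat s) > \omega F$, i.e. $f_{s'} - \omega b_{s'} > \omega F$, which rearranges to $J(s') = f_{s'}/(F + b_{s'}) > \omega$.

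The step I expect to be most delicate is this verification of condition \ref{sch:main:assump_b}, and in particular the boundary case $P_{_{\!\omega}}(\widehat s) = P_{_{\!\omega}}(s')$, where one needs the strict conclusion $J(s') > \omega$ rather than merely $\geqslant$. The computation above shows this is not in fact an obstacle: the strictness is produced entirely by the margin $(\widehat J - \omega) F > 0$ inside the lower bound $P_{_{\!\omega}}(\widehat s) > \omega F$, and it therefore survives the nonstrict inequality $P_{_{\!\omega}}(s') \geqslant P_{_{\!\omega}}(\widehat s)$. With both conditions established, Theorem \ref{sch:main:theorem} applies directly, the starting hypothesis $\omega_0 \in [0,\widehat J]$ being exactly its first case; it then returns, after finitely many iterations, an element of the optimizer set, namely the best segmentation $\widehat s$, which completes the proof.
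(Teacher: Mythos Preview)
Your proposal is correct and follows essentially the same route as the paper: reduce Theorem \ref{theorem:Jaux} to Theorem \ref{sch:main:theorem} by verifying its two conditions, using the two observations of Section \ref{sec:cooptAuxJ} for condition \ref{sch:main:assump_a} and the oblique-projection geometry for condition \ref{sch:main:assump_b}. The only substantive difference is that where the paper appeals to Figure \ref{fig:scheme_examp_3} and leaves the detailed verification of condition \ref{sch:main:assump_b} to the reader, you supply an explicit algebraic computation ($P_{_{\!\omega}}(\widehat s) = \widehat J F + (\widehat J - \omega) b_{\widehat s} \geqslant \widehat J F > \omega F$), which is a welcome tightening rather than a different idea.
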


\begin{remark}\label{remark:AuxFamaly}
	
	Optimizing $J^c(s)$  is similarly done.
\end{remark}

\subsection{Optimizing $J(s)$  for hierarchical segmentation }\label{sec:ob-m-hc}

Using scheme \ref{sch:main} reduces the optimization of $J(s)$ to iterations where  auxiliary measures are optimized. The auxiliary algorithm provides a foreground-background segmentation $s \!\in\! {\cal S}^{^{_2}}$ whose dimensions ($b_s,f_s$) maximize the auxiliary measure corresponding to the current iteration. 
In this work, we use the hierarchy for this optimization, and the auxiliary algorithm returns the best segmentation $\,s \!\in\! {\cal S}^{^{_2}}$  together with the corresponding subset $\,{\cal N}_s \!\subset\! \cal T,\,$ which both depend on the required consistency of $\,s$ with $\cal T.$

\subsubsection{Specifying ${\cal N}_s$ and the  segmentation $s$ for various consistencies}\label{sec:SegFrBk}

Here, we specify the relation between the hierarchy and the segmentation for each of the different consistencies considered in this paper. 

\noindent
{\bf a-consistency:} Trivial and not considered here, as discussed above.

\noindent
{\bf b-consistency:} ${\cal N}_s\,$ is a partition of $\,I.\,$ A segmentation $\,s\,$ is specified by assigning some nodes of $\,{\cal N}_s$ to the foreground $\,{\cal F}_s\,,$ and the rest to the background $\,{\cal B}_s.$

\noindent
{\bf c-consistency:} The nodes of $\,{\cal N}_s\,$ are disjoint, but their union is not the full image. 
The segments of $\,s\,$ are $\,\sbullet{\cal N}_s\,$ (the union  of the regions corresponding to the nodes in ${\cal N}_s$) and the complement $\,I \backslash \sbullet{\cal N}_s.$ 

\noindent
{\bf d-consistency:} Not all nodes of $\,{\cal N}_s\,$ are necessarily disjoint, and their union is not necessarily the full image. %
The segments of $\,s$ are specified as follows: 

Let $\,\cal N \!\subset\! T\,$ be a subset of nodes. Because the nodes belong to a hierarchy, each pair of nodes is either disjoint or nested.
Denote by $\,{\cal K}^{^{_0}}_{_{\!\cal N}} \!\subset\! \cal N$ the subset of disjoint nodes that are not nested in any other node from $\cal N.$ Recursively, denote by ${\cal K}^i_{_{\!\cal N}} \!\subset\! \cal N\,$ the subset of disjoint nodes that are not nested in any other node from ${\cal N} \backslash\{ \cup_{j = 0}^{i-1} \, {\cal K}^j_{_{\!\cal N}} \}.$ We refer to each $\,{\cal K}^i_{_{\!\cal N}}\,$ as a {\em layer of} $\,\cal N.\,$ Note that $\,\sbullet{\cal K}^i_{_{\!\cal N}} \! \subset \! \sbullet{\cal K}^j_{_{\!\cal N}} \,\,\, \forall \, i \! > \! j\,$ (each subsequent layer is nested in any previous layer); hence, $\,\sbullet{\cal K}^{^{_0}}_{_{\!\cal N}} = \sbullet{\cal N}.\,$ 
Let $\,i^{^{_N}}_{_{\cal N}},$ be the index of the layer in which the node $N$ lies. 
Note that the set of layers is a partition of $\,\cal N,$ {\em i.e.}, every node $\,N \! \in \! \cal N$ is associated with exactly one index $\,i^{^{_N}}_{_{\cal N}}.$ Note that $\,i^{^{_N}}_{_{\cal N}}$ is the number of nodes in $\cal N$ in which $N$ is nested. Let $i_{_{\cal N}}^{max}$ be the largest index corresponding to a nonempty layer. The segmentation is specified from 
\begin{multline}
\label{def:D_N}
	{\cal D}_{_{^{\!{\cal N}}}} = \bigg{\{}\,D^i_{_{^{\!\cal N}}} \enskip|\enskip D^i_{_{^{\!\cal N}}} = \sbullet{\cal K}^{2 \cdot i}_{_{^{\!\cal N}}} \backslash \sbullet{\cal K}^{2 \cdot i + 1}_{_{^{\!\cal N}}}\,,\,\,\\
	0 \leqslant i \leqslant \floor*{\frac{i_{_{\cal N}}^{max}}{2}} 
	\,\bigg{\}}
\end{multline}

Each $D^i_{_{\!\cal N}}$ is the set-difference of the layers $\,2 \!\cdot\! i\,$ and $\,2 \!\cdot\! i + 1.$ Since each subsequent layer is nested in any previous layer, all $D^i_{_{\!\cal N}}$ are disjoint. The segments of $s$ are $\sbullet{\cal D}_{_{\!{\cal N}_s}}$ and the complement $I \backslash \sbullet{\cal D}_{_{\!{\cal N}_s}};$ see Figure \ref{fig:D_N_example}.

\begin{figure*}[t]
 	\def\svgwidth{1\textwidth}
	\centering{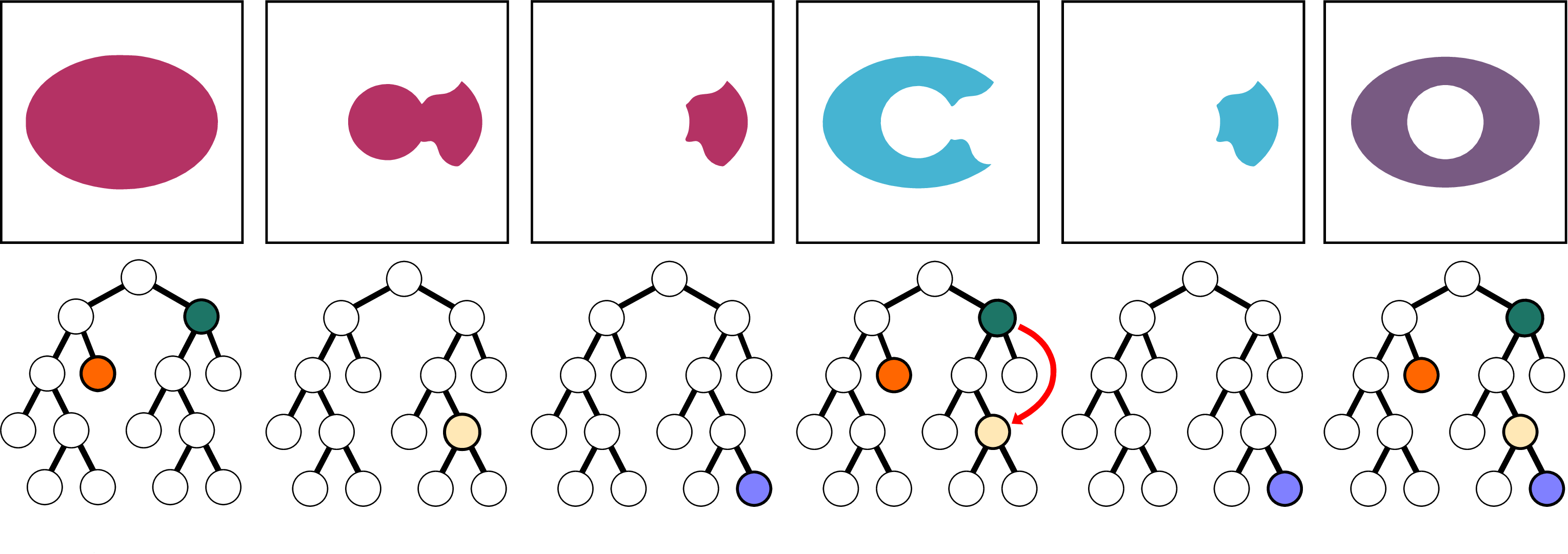}
	\caption{Specification of a segmentation that is d-consistent with the subset $\,{\cal N} \! = \! \{N_4, N_7, N_{10}, N_{14}\}\,$ consisting of nodes from the hierarchy described in Figure \ref{fig:hierarchy-examp}. The segments are $\sbullet{\cal D}_{_{\!\cal N}}$ and the complement $I \backslash \sbullet{\cal D}_{_{\!\cal N}}.$ Here maximal non-empty layer correspond to the index $i_{_{\cal N}}^{max} \! = 2.\,$ The layers are: {\bf (a)} ${\cal K}^{^{_0}}_{_{\!\cal N}} \! = \! \{N_4, N_{14}\}\,$ {\bf (b)} ${\cal K}^{^{_1}}_{_{\!\cal N}} \! = \! \{N_{10}\}\,$ {\bf (c)} ${\cal K}^{^{_2}}_{_{\!\cal N}} \! = \! \{N_7\}.\,$ The set differences between subsequent layers are {\bf (d)} $D^{^{_0}}_{_{\!\cal N}} \! = \! {\cal K}^{^{_0}}_{_{\!\cal N}} \! \backslash {\cal K}^{^{_1}}_{_{\!\cal N}}\,$  {\bf (e)} $D^{^{_1}}_{_{\!\cal N}} \! = \! {\cal K}^{^{_2}}_{_{\!\cal N}} \! \backslash \varnothing.\,$ The final segmentation is specified by {\bf (f)} $\sbullet{\cal D}_{_{\!\cal N}} \! = \! D^{^{_0}}_{_{\!\cal N}} \! \cup \! D^{^{_1}}_{_{\!\cal N}}$; see Section \ref{sec:SegFrBk}.
	}\label{fig:D_N_example}
\end{figure*}

\subsubsection{Calculation of segmentation dimensions for various consistencies}\label{sec:NodeDim}

To calculate the  segmentation dimensions $(b_s,f_s)$  efficiently, we use a tree data structure to represent the tree hierarchy. For each node $N$ of the tree, we store the area of the node inside the ground truth's parts $(b^{^{_N}} \!\! = \!\! |N \cap {\cal B}_{_{GT}} |\, , \,f^{^{_N}} \!\! = \!\! |N \cap {\cal F}_{_{GT}}|).$ Similarly to the {\em segmentation dimensions}, specified above, we refer to these values as {\em node dimensions}. Note that the dimensions of a union of disjoint nodes are equal to the sum of the dimensions of all nodes from the union, and the dimensions of a set-difference of two nested nodes are equal to the difference of their dimensions.
Given a segmentation $s=({\cal X}_s\,,\,I \backslash {\cal X}_{s})  \!\in\! {\cal S}^{^{_2}},$ the calculation of its dimensions $(b_s,f_s)$ (which are the dimensions of the segment ${\cal X}_s$) from the dimensions of the nodes of a subset ${\cal N}_s$ depends on the required consistency of $s$ with ${\cal N}_s\,$:

\smallskip{\bf b-consistency:} $({\cal X}_s \!=\! {\cal F}_s).$ $(b_s,f_s)$ are calculated by the sum of the dimensions of the nodes assigned to $\,{\cal F}_s.$

\smallskip{\bf c-consistency:} $({\cal X}_s \!=\! \sbullet{\cal N}_s).$ $(b_s,f_s)$ are calculated by the sum of the dimensions of ${\cal N}_s.$

\smallskip{\bf d-consistency:} $({\cal X}_s \!=\! \sbullet{\cal D}_{_{\!{\cal N}_s}}\!).$ 
By the observations above about the sums and difference of dimensions and Equation \eqref{def:D_N}, the dimensions $(b_s,f_s)$ are calculated by the sum of the dimensions of all nodes from ${\cal N}_s\,,$ each multiplied by an appropriate sign: -1 to the power of $\,i^{^{_N}}_{_{{\cal N}_s}}\!.$ 
More formally, we can write $(b_s,f_s)$ as the  expression below. 
Note that for the b/c consistencies, ${\cal N}_s$ consists of a single layer: $i^{^{_N}}_{_{{\cal N}_s}} \! \!=\! 0\enskip\forall N \!\in\! {\cal N}_s$. Therefore, this expression is valid for all (b/c/d) consistencies.

\smallskip{\bf A unified expression of segmentation dimensions:}  
\begin{subequations}
\label{def:dim}
\begin{align}
\label{def:dim1}
	b_s = &\bigg{(} \!\!\!\!\!\!\!\!\!\! \sum \limits_{\substack{\,\,\,\,\,\,\,\,\, N \in\, {\cal N}_s:\\\,\,\,\,{\cal N}_s \,\text{specifies}\,\,{\cal X}_s}} \!\!\!\!\!\!\!\!\! b^{^{_N}} \! \cdot (-1)^{^{_{i^{^N}_{{\cal N}_s}}}}\bigg{)} \\
	\label{def:dim2}
	f_s = & \bigg{(} \!\!\!\!\!\!\!\!\!\! \sum \limits_{\substack{\,\,\,\,\,\,\,\,\, N \in\, {\cal N}_s:\\\,\,\,\,{\cal N}_s \,\text{specifies}\,\,{\cal X}_s}} \!\!\!\!\!\!\!\!\! f^{^{_N}} \! \cdot (-1)^{^{_{i^{^N}_{{\cal N}_s}}}}\bigg{)}
\end{align}
\end{subequations}
\begin{remark}\label{remark:WellDefSegDim}

Since for a segmentation $s \!\in\! {\cal S}^{^{_2}}$ the subset $\,{\cal N}_s$ is not necessarily unique, we could ask whether the expression \eqref{def:dim} is well-defined, {\em i.e.}, whether we get the same area ($b_s \! = \! |{\cal X}_s \cap {\cal B}_{_{GT}}| \, , f_s \! = \! |{\cal X}_s \cap {\cal F}_{_{GT}}|$) for different subsets ${\cal N}_s.$ The answer to this question is positive, due to the properties of node dimensions for the union of disjoint nodes and for the set-difference of nested nodes.
\end{remark}

\subsubsection{Auxiliary measures additivity}\label{sec:Decom}

A particularly useful property of the auxiliary measures is their additivity. 
Consider some  attribute defined on every node in the tree. If the  attribute of each non-leaf node is the sum of the attributes of the node's children, then we say that this attribute is  {\em additive}. 

For a specific projection $P_{_{\!\omega}}$, the two dimensions of a node may be merged into one attribute, $ A^{\!^{P_{_{\!\omega}}}}\!(N) = f^{^{_N}} \! - \omega \cdot b^{^{_N}}$. By inserting  \eqref{def:dim1} and \eqref{def:dim2}
into $P_{_{\!\omega}}(s) = f_s \, - \, \omega \cdot b_s $ \eqref{auxuliary_meas}, we get a closed form, simplified linear expression for the auxiliary measure of the segmentation $s$. We may refer to this measure, alternatively, as the {\em benefit} of the corresponding node set ${\cal N}_s$:

\begin{equation}\label{eq:Pj_attr}
	 P_{_{\!\omega}}(s) =
	 B[{\cal N}_s]=\!\!\!\!\!\!\!\!\! \sum \limits_{\substack{\,\,\,\,\,\,\,\,\, N \in\, {\cal N}_s:\\\,\,\,\,{\cal N}_s \,\text{specifies}\,\,{\cal X}_s}} \!\!\!\!\!\!\!\!\! A^{\!^{P_{_{\!\omega}}}}\!(N) \cdot (-1)^{^{_{i^{^N}_{{\cal N}_s}}}}  
\end{equation}

Note that each non-leaf node $N$ is the union of its disjoint children; hence, the dimensions $(b^{^{_N}}, f^{^{_N}})$ are the sum of the dimensions of the children of $N,$ which implies the additivity for $A^{\!^{P_{_{\!\omega}}}}\!(N).$ The additivity property holds for all projections. For simplification, we refer to the attribute of $N$ as $A(N)$.
 
The auxiliary algorithms search for the subset of nodes maximizing the benefit~\eqref{eq:Pj_attr}. 
These optimization tasks are performed under the constraint: $|{\cal N}_s| \,\leqslant k$. 

While \eqref{eq:Pj_attr} provides a general expression for all consistencies, in practice we use the following consistency-dependent expressions, which are equivalent and more explicit.

\begin{property} (Equivalent benefit expressions)
\label{def:benefit}

\begin{description}
    \item[b-consistency:] ${\cal N}$ is a partition of $I$ and\\
    $B[{\cal N}] = \sum \limits_{\substack{\,\,\,\,\,\,\,\,\, N \in\,{\cal N}:\\\,\,\,\,\,\,\,\,A(N)\,>\,0}} \!\!\!\!\!\!\! A(N)$
    \item [c-consistency:] ${\cal N}$ consists of disjoint nodes and 
    $B[{\cal N}] = \sum \limits_{\,\,\,\,\,\,\, N \in\,\cal N} \!\!\!\! A(N)
	$
	\item [d-consistency:] $ B[{\cal N}] = \sum \limits_{\,\,\,\,\,\,\, N \in\,\cal N} \!\!\!\! A(N) \cdot (-1)^{^{_{i^{^N}_{_{\cal N}}}}}$
\end{description}
\end{property}
Here and below, we prefer to use the more general 
${\cal N}$ (over ${\cal N}_s$), when the discussion applies to general  sets of nodes from the tree. 

The proposed auxiliary algorithms (described below) are not restricted to the auxiliary measures discussed above; they would work for any additive measure $\cal Q.$ The additivity is crucial, because otherwise the score ${\cal Q}(s)$ is ill-defined, {\em i.e.}, it may result in different score values for different subsets ${\cal N}_s$ specifying the same $s \!\in\! {\cal S}^{^{_2}}\!.$

\subsubsection{Using the tree structure for maximizing the auxiliary measures}\label{sec:TreeStruct}

The maximization of the benefit (property \ref{def:benefit}) results in a subset of nodes subject to the consistency constraints, with the maximal benefit in $\,\cal T.$ The key observation in this maximization is that a subset with the maximal benefit in a subtree ${\cal T}^N\!$ can be obtained from subsets with the maximal benefit in the subtrees of children of $N.$
That is, we can use the recursive structure of the tree $\cal T$ to maximize the benefit. 

Let $\,\cal N' \!\subset\! N \!\subset\! T.$ We say that $\,\cal N'$ is {\em best} if it has the highest benefit relative to every other subset of $\cal N$ with the same number of nodes. Depending on the context, $\,\cal N'$  should also have the properties associated with the consistency; {\em i.e.}, being a partition (for b-consistency) or belong to a single layer (c-consistency). Interestingly, we also need the notion of worst subsets. $\,\cal N'$ is {\em worst} if it has the minimal benefit relative to other subsets of $\,\cal N$ of the same size.

\begin{remark}\label{remark:BestSubsetsWithSameBenefitAndDifferentSizes}
	
	Note that within the same consistency type, there can be several best/worst subsets in $\,\cal N,$ having the same benefit but not necessarily of the same size.
\end{remark}

Thus, a subset $\cal N$ maximizes the benefit (property  \ref{def:benefit}), if and only if $\cal N$ is a best subset in $\cal T.$ Below, by referring to $\cal N$ as best without specifying in which subset of $\cal T$ the subset $\cal N$ is best, we mean that $\cal N$ is best in the entire $\cal T.$

The following claim %
readily follows from the additivity properties of the dimensions  (Section \ref{sec:Decom}). %
\begin{lemma}\label{lemma:Psubset}
	\begin{enumerate}
		\itemsep3pt%
		
		\item[(a)] \!\!Let ${\cal N}_1\!$ and ${\cal N}_2$ be subsets of nodes, such that $\sbullet{\cal N}_1\!$ and $\sbullet{\cal N}_2$ are disjoint, then: $B[{\cal N}_1 \cup {\cal N}_2] \!=\! B[{\cal N}_1] + B[{\cal N}_2]$
		
		\item[(b)] \!\!Let $N$ be a node and $\,\cal N$ be a subset of nodes, such that $\sbullet{\cal N} \!\subset\! \,N$ then: $B[{\{N\} \cup \cal N}] \!=\! A(N) - B[\cal N]$
	\end{enumerate}
\end{lemma}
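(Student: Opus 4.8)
The plan is to prove both parts of Lemma~\ref{lemma:Psubset} directly from the unified dimension expression \eqref{def:dim} together with the benefit expression \eqref{eq:Pj_attr}, relying on the additivity of the attribute $A(N)$ and on the structure of the layers ${\cal K}^i_{_{\!\cal N}}$. The essential fact I would invoke throughout is that the segmentation dimensions, and hence the benefit $B[\cdot]$, are well-defined functions of the induced segment (Remark~\ref{remark:WellDefSegDim}), and that they behave linearly under disjoint unions (dimensions add) and under set-differences of nested nodes (dimensions subtract). Because $B[{\cal N}] = f_s - \omega\cdot b_s$ is a fixed linear combination of the two dimensions, it suffices to establish each identity at the level of the dimensions $(b,f)$ and then take the same linear combination.

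For part (a), I would argue as follows. Since $\sbullet{\cal N}_1$ and $\sbullet{\cal N}_2$ are disjoint, no node of ${\cal N}_1$ is nested in a node of ${\cal N}_2$ or vice versa. Hence when we form the layer decomposition of ${\cal N}_1\cup{\cal N}_2$, the layer index $i^{^{_N}}_{{\cal N}_1\cup{\cal N}_2}$ of a node $N\in{\cal N}_1$ counts only the nodes of ${\cal N}_1$ in which $N$ is nested (and symmetrically for ${\cal N}_2$), because $i^{^{_N}}_{_{\cal N}}$ is the number of nodes of $\cal N$ containing $N$. Therefore the per-node sign $(-1)^{i^{^N}}$ in \eqref{eq:Pj_attr} is unchanged when passing from ${\cal N}_1$ alone to ${\cal N}_1\cup{\cal N}_2$, and the benefit sum splits cleanly into the ${\cal N}_1$-contribution plus the ${\cal N}_2$-contribution, giving $B[{\cal N}_1\cup{\cal N}_2]=B[{\cal N}_1]+B[{\cal N}_2]$.

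For part (b), the node $N$ satisfies $\sbullet{\cal N}\subset N$, so $N$ is strictly coarser than every node of $\cal N$, i.e.\ every node of $\cal N$ is nested in $N$. In the layer decomposition of $\{N\}\cup{\cal N}$, the node $N$ alone occupies layer $0$, and every node $M\in{\cal N}$ has its layer index incremented by exactly one relative to its index in $\cal N$, since exactly one extra containing node ($N$) has been added. Consequently each sign $(-1)^{i^{^M}}$ flips, so $\sum_{M\in{\cal N}} A(M)\cdot(-1)^{i^{^M}_{\{N\}\cup\cal N}} = -B[{\cal N}]$, while $N$ contributes $A(N)\cdot(-1)^0=A(N)$. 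Adding these yields $B[\{N\}\cup{\cal N}]=A(N)-B[{\cal N}]$, which matches the geometric meaning that attaching the coarser node $N$ turns the set-difference $N\setminus\sbullet{\cal N}$ into the new top layer.

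The main obstacle, and the one step I would spell out most carefully, is the bookkeeping of the layer indices $i^{^{_N}}_{_{\cal N}}$ under the two set operations: verifying that in part (a) the indices are genuinely preserved (which hinges on the disjointness forbidding any new nesting relations across ${\cal N}_1$ and ${\cal N}_2$) and that in part (b) each index increases by exactly one (which hinges on $N$ containing all of $\cal N$ and on no other containment relations being created). Once the index arithmetic is pinned down, the rest is an immediate consequence of the definition of $B[\cdot]$ in \eqref{eq:Pj_attr} and the additivity of $A$, so I would present the proof as essentially a verification of these two sign-pattern claims.
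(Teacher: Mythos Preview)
Your proposal is correct and aligns with the paper's own justification, which is essentially a one-line remark that the lemma ``readily follows from the dimensions properties'' (dimensions add over disjoint unions and subtract for set-differences of nested nodes), together with the observation after the lemma that $\{N\}\cup{\cal N}$ specifies the set-difference segment $N\setminus\sbullet{\cal D}_{_{\!\cal N}}$. Your layer-index bookkeeping is simply a more explicit verification of the same fact at the level of the signed sum \eqref{eq:Pj_attr}, and it is a perfectly valid way to spell out what the paper leaves implicit.
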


Lemma \ref{lemma:Psubset}(b) applies only to the d-consistency, in the case where $\sbullet{\cal N}$ and $N$ are not disjoint. The set of nodes ${\{N\} \cup \cal N}$ corresponds to a segment that is the set difference between $N$ and the segment specified by  ${\cal N}$, which leads to the claim on the benefit. 

The children of a non-leaf node are disjoint and nested in the node, which implies the following claim.%
\begin{lemma}\label{lemma:best-worst}
	
	Let $N \!\in\! \cal T$ be a non-leaf node: $\!N \!=\! N_r \cup N_l\,,$ where $N_r\!$ (right), $N_l\!$ (left) are its children. Let $\,{\cal N}^N\!$ be a subset of $\,{\cal T}^N\!$ and let ${\cal N}^{N_r}\!,\,{\cal N}^{N_l}\!$ be (possibly empty) subsets of $\,{\cal N}^N\!$ from $\,{\cal T}^{N_r}\!$ and ${\cal T}^{N_l}\!$ respectively. Then:
\end{lemma}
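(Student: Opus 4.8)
The plan is to establish how best (and worst) subsets of $\,{\cal T}^N\!$ decompose across the two children $N_r, N_l$, for each consistency type. Since the statement is cut off right after "Then:", I will reconstruct the natural claims and prove them. The governing principle is that the two subtrees ${\cal T}^{N_r}$ and ${\cal T}^{N_l}$ are disjoint, so by Lemma \ref{lemma:Psubset}(a) the benefit of a subset splits additively: $B[{\cal N}^N] = B[{\cal N}^{N_r}] + B[{\cal N}^{N_l}]$ whenever ${\cal N}^N = {\cal N}^{N_r} \cup {\cal N}^{N_l}$ and no node equals $N$ itself. I expect the claim to assert, for a fixed cardinality $m$, that the maximal benefit over size-$m$ subsets of ${\cal T}^N$ equals $\max_{m_r + m_l = m}\big( B^*_{N_r}(m_r) + B^*_{N_l}(m_l)\big)$, where $B^*_{N_c}(\cdot)$ denotes the best-subset benefit in the child subtree. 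The worst-subset statement is symmetric with $\min$ replacing $\max$.

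First I would fix the consistency type and observe that every subset ${\cal N}^N \subset {\cal T}^N$ that does not contain $N$ itself partitions uniquely into ${\cal N}^{N_r} = {\cal N}^N \cap {\cal T}^{N_r}$ and ${\cal N}^{N_l} = {\cal N}^N \cap {\cal T}^{N_l}$, because every node strictly inside $N$ lies in exactly one child subtree (nodes are nested-or-disjoint, and $N_r \cap N_l = \emptyset$). For the d-consistency, the layer index $i^{^{_N'}}_{_{{\cal N}^N}}$ of any node $N'$ is unaffected by restricting to its own child subtree, since a node from the other child can neither contain nor be contained in $N'$; hence the signs $(-1)^{i}$ in \eref{eq:Pj_attr} are preserved under the split, and Lemma \ref{lemma:Psubset}(a) applies directly. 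For c-consistency the disjointness constraint decomposes likewise, and for b-consistency a partition of $N$ is exactly the disjoint union of a partition of $N_r$ and a partition of $N_l$.

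Next I would run the standard optimal-substructure argument in both directions. Given a best size-$m$ subset ${\cal N}^N$ of ${\cal T}^N$, additivity gives $B[{\cal N}^N] = B[{\cal N}^{N_r}] + B[{\cal N}^{N_l}]$; if either piece were not best in its child subtree for its own size, replacing it by a best one of equal cardinality would strictly increase the total, contradicting optimality. Conversely, combining best child-subsets of sizes $m_r, m_l$ with $m_r + m_l = m$ yields a valid subset of size $m$ whose benefit is the sum, so the best child-subsets compose into a candidate whose benefit is attained. Taking the maximum over the split $m_r + m_l = m$ therefore computes $B^*_N(m)$ exactly. The worst case is identical with inequalities reversed.

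The main obstacle, and the place I would spend the most care, is the case where the optimal subset of ${\cal T}^N$ contains the node $N$ itself (relevant only for c- and d-consistency, where ${\cal N}$ need not be a full partition). When $N \in {\cal N}^N$, the split into child subtrees is not the right recursion; instead one must invoke Lemma \ref{lemma:Psubset}(b), under which adding $N$ "above" a subset ${\cal N}$ with $\sbullet{\cal N} \subset N$ gives benefit $A(N) - B[{\cal N}]$. This couples the best-vs-worst bookkeeping: obtaining the best subset containing $N$ requires the \emph{worst} subset of the children (to maximize $A(N) - B[\cal N]$), which is exactly why the lemma must track both best and worst subsets simultaneously. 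I would handle this by stating the recursion as a case split—"$N \notin {\cal N}^N$" using additivity across children versus "$N \in {\cal N}^N$" using Lemma \ref{lemma:Psubset}(b) on the combined child subset—and verifying that the layer indices shift consistently (adding $N$ increments $i^{^{_N'}}$ by one for every $N' \subset N$, flipping all interior signs, which is precisely the $-B[\cal N]$ term). Closing this coupling cleanly is the crux; the rest is routine additive bookkeeping.
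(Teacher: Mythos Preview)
Your reconstruction of the missing conclusion and your argument are both correct, and they follow essentially the same route as the paper: an exchange argument using Lemma~\ref{lemma:Psubset}(a) for the $N\notin{\cal N}^N$ case and Lemma~\ref{lemma:Psubset}(b) for the $N\in{\cal N}^N$ case, with the best/worst roles swapping in the latter. The only minor difference is scope: the paper states Lemma~\ref{lemma:best-worst} as a one-directional necessary condition (best/worst in ${\cal T}^N$ implies best/worst, resp.\ worst/best, in the child subtrees) and proves it by the contradiction you describe, while the converse (sufficient) direction you also sketch is separated out as the subsequent Lemma~\ref{lemma:conclBest}.
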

\begin{enumerate}%
	\itemsep2pt%
	
	\item[(a)] If $\,N \notin {\cal N}^N$ then: $\quad {\cal N}^N$ is best/worst in $\,{\cal T}^N \quad\enskip \Rightarrow \quad\enskip {\cal N}^{N_r}\!$, ${\cal N}^{N_l}$ are best/worst in $\,{\cal T}^{N_r}\!$, ${\cal T}^{N_l}$
	
	\item[(b)] If $\,N \in {\cal N}^N$ then: $\quad {\cal N}^N$ is best/worst in $\,{\cal T}^N \quad\enskip \Rightarrow \quad\enskip {\cal N}^{N_r}\!$, ${\cal N}^{N_l}$ are worst/best in $\,{\cal T}^{N_r}\!$, ${\cal T}^{N_l}$
	
\end{enumerate}
\begin{proof}
	Assume the opposite about any $\,{\cal N}^{N_r},\,{\cal N}^{N_l}.$  Lemma \ref{lemma:Psubset} implies that $\,{\cal N}^N\!$ can be improved/worsened, which contradicts $\,{\cal N}^N\!$ being best/worst.
\end{proof}

Lemma \ref{lemma:best-worst} specifies the necessary condition for a best/worst subset in $\,{\cal T}^N\!.$ With its help, the search for the best subsets in $\,{\cal T}^N\!$ can be significantly reduced, making this search feasible. Namely, for finding a best subset in $\,{\cal T}^N\!,$ it is enough to examine only those subsets that are best/worst in the subtrees of the children of $\,N.$ The following (trivial) sufficient condition for best/worst subset in $\,{\cal T}^N\!$ is to examine all possible candidates.

\begin{lemma}\label{lemma:conclBest}
	 Let $N \!\in\! \cal T$ be a non-leaf node. The subset ${\cal N} \!\subset\! {\cal T}^N\!$ having the largest/smallest benefit from the following is a best/worst subset in ${\cal T}^N\!:$
	\begin{enumerate}%
		\itemsep0pt%
		
		\item[(a)] The union of best/worst subsets in $\,{\cal T}^{N_r}\!$ and in ${\cal T}^{N_l}\!,$ having the maximal/minimal benefit among all such unions of size $|{\cal N}|.$
		
		\item[(b)] $N$ itself and the union of worst/best subsets in $\,{\cal T}^{N_r}\!$ and in ${\cal T}^{N_l}\!,$ having the minimal/maximal benefit among all such unions of size $|{\cal N}| \,-\, 1.$
		
	\end{enumerate}
	
\end{lemma}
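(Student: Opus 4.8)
The plan is to read this as the converse of the necessary condition in Lemma~\ref{lemma:best-worst}: since ${\cal T}^N$ is finite, a best (resp.\ worst) subset of any prescribed size exists, and Lemma~\ref{lemma:best-worst} forces it to have exactly one of two structural shapes --- which are precisely the two candidate families (a) and (b). Hence it suffices to show that, for each size, every subset of ${\cal T}^N$ is dominated in benefit by one of the enumerated candidates, so that the candidate of largest (resp.\ smallest) benefit is itself best (resp.\ worst). I would establish the ``best'' half and obtain the ``worst'' half by reversing all inequalities.

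Fix a target size and take an arbitrary ${\cal N} \subset {\cal T}^N$. Because $N_r,N_l$ are disjoint with $N = N_r \cup N_l$, every node of ${\cal N}$ other than $N$ lies in exactly one child subtree, so ${\cal N} \setminus \{N\} = {\cal N}^{N_r} \cup {\cal N}^{N_l}$ with ${\cal N}^{N_r} \subset {\cal T}^{N_r}$, ${\cal N}^{N_l} \subset {\cal T}^{N_l}$. I then split on whether $N \in {\cal N}$. If $N \notin {\cal N}$, then $\sbullet{\cal N}^{N_r}$ and $\sbullet{\cal N}^{N_l}$ are disjoint, so Lemma~\ref{lemma:Psubset}(a) gives $B[{\cal N}] = B[{\cal N}^{N_r}] + B[{\cal N}^{N_l}]$; replacing each child part by a best subset of the same size can only raise the benefit, and optimizing over the split of the size between the two children yields exactly candidate (a). If $N \in {\cal N}$, write ${\cal N}' = {\cal N}^{N_r} \cup {\cal N}^{N_l}$; then $\sbullet{{\cal N}'} \subset N$, so Lemma~\ref{lemma:Psubset}(b) gives $B[{\cal N}] = A(N) - B[{\cal N}']$. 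Here maximizing $B[{\cal N}]$ amounts to \emph{minimizing} $B[{\cal N}']$, i.e.\ choosing worst subsets in the two children of sizes summing to $|{\cal N}|-1$; this is candidate (b). Taking the better of the two cases gives a best subset of the prescribed size.

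The step I expect to need the most care is the best$\leftrightarrow$worst reversal triggered by adjoining $N$: the minus sign in Lemma~\ref{lemma:Psubset}(b) --- which encodes the layer-parity factor $(-1)^{i^{N}_{\cal N}}$ of the d-consistency --- turns maximization of the parent benefit into minimization over the children, so candidate (b) must pair $N$ with \emph{worst} (not best) child subsets, and symmetrically for the worst problem. Two smaller points I would check explicitly: first, following Remark~\ref{remark:BestSubsetsWithSameBenefitAndDifferentSizes}, ``a best/worst subset of a given size'' is well-defined as a benefit value even when the realizing subset is not unique, so the optimization over the size split in (a) and (b) is legitimate; second, the consistency-specific constraints are preserved by the construction, since a union over the two disjoint child subtrees of constraint-respecting subsets again respects the constraint, and adjoining $N$ is consistent with the layer bookkeeping underlying \eqref{eq:Pj_attr}. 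This is what lets a single recursive enumeration serve the b-, c-, and d-consistencies at once.
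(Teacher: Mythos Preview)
Your proposal is correct and matches the paper's approach: the paper states Lemma~\ref{lemma:conclBest} without proof, calling it the ``(trivial) sufficient condition'' obtained by examining all candidates singled out by the necessary condition of Lemma~\ref{lemma:best-worst}. Your case split on $N\in{\cal N}$ versus $N\notin{\cal N}$, together with the additivity formulas of Lemma~\ref{lemma:Psubset}(a) and (b), is exactly the elaboration of that triviality, and your observation that the minus sign in Lemma~\ref{lemma:Psubset}(b) flips best to worst in the children is the only point requiring any care.
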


We can now describe the auxiliary algorithms. From a high-level point of view, they work as follows: 
At the outset of the run, each auxiliary algorithm specifies each leaf of $\,\cal T$ as both the best and the worst subset (of size 1)  in the trivial subtree of the leaf. Then, each auxiliary algorithm visits all non-leaf nodes of $\,\cal T$ once, in a post-order of tree traversal which guarantees visiting every node after visiting its children. When visiting a node $N,$ each algorithm finds the best/worst subsets in ${\cal T}^N\!$ using Lemma \ref{lemma:conclBest}.

\subsection{The auxiliary algorithms}

\subsubsection{Preliminaries}\label{sec:auxAlgs}

Generally, the algorithm works as follows: Starting from the hierarchy leaves, the algorithms calculates the maximal auxiliary quality measure for every node and for every budget (up to $k$) in its subtree. When reaching the root, the decision about the particular nodes used for the optimal auxiliary measure is already encoded in the hierarchy nodes and is then explicitly extracted. Like \cite{pont2012upper}, it is a dynamic programming algorithm.  

The following variables and notations are used within the algorithms:
\begin{enumerate}%
	\itemsep2.64pt%
	
	\item $N_1$, $N_2$, $N_3, \dots, N_{|{\cal{T}}|}$ is the set of all nodes, ordered in a post-order of tree traversal.
	
	\item $N_l$ (left) and $N_r$ (right) are the children of a non-leaf node $N.$
	
	\item $A(N)$ is an additive attribute of a node $N.$ Recall that $A(N) = A(N_r) + A(N_l)$.
	
	\item $k \! \in \! \mathbb{N}$ is a constraint specifying the maximal size of the best subset $ (\,1 \!\leqslant\! k \!\leqslant\! |\cal L|\,)$. 
	
	\item $t(N) = \min(\,k\,,\,|{\cal{L}}^N|\,)\,$ is the maximal allowed size of a best/worst subset in $\,{\cal T}^N\!,$ which is limited by $k$ or by the number of leaves in $\,{\cal T}^N\!.$

 \item $r$ is the number of nodes in the node subset associated with the right child of $N$. It depends on $N$ and is optimized by the algorithms. The range of $r$ values is  denoted $(\, r_{min} \, , \, r_{max} \,)$.
	\item ${\cal H}_+^N\!(i)\,/\,{\cal H}_-^N\!(i) \enskip\,\, i \!=\! 1, \dots, t(N)\,$ are best/worst subsets of size $\,i\,$ in ${\cal T}^N\!.$ The best subset ${\cal H}_+^{Root}[k]$, denoted $\cal H,$ is the output of the auxiliary algorithm, maximizing the benefit; see however remark \ref{remark:MinimalBestSubset}. These subsets are used to describe the algorithm, but are not
 variables of the algorithm. 
	
	\item $B_+^N[i]\,/\,B_-^N[i] \quad i \!=\! 1 \dots t(N)\,$ are vector variables stored in node $N,$ holding the benefits of ${\cal H}_+^N\!(i)\,/\,{\cal H}_-^N\!(i).$
	
	\item $R_+^N[i]\,/\,R_-^N[i] \quad i \!=\! 1 \dots t(N)\,$ are vector variables stored in node $N,$ holding the number of those nodes in ${\cal H}_+^N\!(i)\,/\,{\cal H}_-^N\!(i)\,,$ which belong to $\,{\cal T}^{N_r}$ (the subtree of $N_r$).
	The number of nodes in $\,{\cal T}^{N_l}$  follows.
	\item $Q\,$ is queue data structure, used to obtain $\cal H$ from the vectors $R_+^N\,/\,R_-^N$.
\end{enumerate}

To find the best subset consisting of a single layer (as is the case for b/c consistency), we need to examine only the corresponding best subsets and disregard the worst subsets. 
In this case, we simplify the notation and use  $B^N,R^N,{\cal H}^N\!$  instead of $B_+^N,R_+^N,{\cal H}_+^N.$

\begin{remark}
\label{remark:CooptimalitySubsets}
	
Different optimal subsets for different $k$ are associated with different $\omega$ parameter values. Therefore, the set of subsets  $\{{\cal H}^{Root}(i) ; \ i < k\}$ obtained with the best subset ${\cal H}^{Root}(k)$ are not optimal as well. 
\end{remark}

\subsubsection{b-consistency}
\label{sec:auxAlgB}

The auxiliary algorithm for b-consistency is formally given in Algorithm \ref{alg:b1}.

A best subset (for b-consistency) (def. \ref{def:benefit}), ${\cal H}^N\!(i)$, must be a $\cal T$-partition of $\,\sbullet{\cal T}^N \!\!=\! N.$ Hence, $N \!\in\! {\cal H}^N\!(i),$ if and only if $i \!=\! 1.$ 
Thus, $B^N[1]$ is the benefit of the node $N$ itself. To calculate $B^N[i]\,$ for $\,i \!>\! 1\,,$ we need only the condition (a) of Lemma~\ref{lemma:conclBest}\,, which implies that $B^N[i]$ is the maximum of $\,B^{N_r}[r] + B^{N_l}[i-r]\,,$ over all possible values of $\,r.$ This part is carried out in lines 1-5 of Algorithm \ref{alg:b1}.

The best subset ${\cal H} = {\cal H}^{Root}[k]$ and its subset of nodes with a positive attribute, denoted $\cal G$, are specified from the vectors $R^N\!.$ The number of nodes in $\cal H$ that belong to the subtree of the root's right child is $R^{Root}[k]$ (recall that $\,t(Root) \!=\! k),$ and their number in the subtree of the left child is $\,k- R^{Root}[k].$ The same consideration is applied recursively to every node $N,$ stopping when $R^{N}[i]$ is equal to zero. This part is carried out in lines 6-16 of Algorithm \ref{alg:b1}.

\begin{algorithm2e*}
	\caption{Auxiliary algorithm for b-consistency}
		\label{alg:b1}
	\DontPrintSemicolon
	\setcounter{AlgoLine}{0}
  
	\For(\tcp*[f]{See note \ref{itm:Hb}, Sec.~\ref{sec:auxAlgB}}){$N = N_1, \, N_2, \, N_3, \, \dots, \, N_{|{\cal{T}}|}$}{
		$(\, B^N \, , \, R^N \,)[1] = (\, \max\!\Big{(}A(N)\,,\,0\Big{)} \,\, , \,\, 0 \,)$\;
  
		\For(\tcp*[f]{See note \ref{itm:rangeb}, Sec.~\ref{sec:auxAlgB}}){$i = 2, \dots, t(N)$}{
			$(\, r_{min} \, , \, r_{max} \,) \,=\, (\, \max( \, 1 \, , \, i-t(N_l) \,) \,\, , \, \min(\, t(N_r) \, , \, i-1 \,) \,) \enskip $\;
   
			$(\, B^N \, , \, R^N \,)[i] \,=\, \maxargmax \limits_{r_{min} \,\, \leqslant \,\, %
				r \,\, \leqslant \,\, r_{max}} (\,B^{N_r}[r] + B^{N_l}[i-r] \,)$\;
		}
	}
	$(\, \cal H, \, G \,) \, = \, (\, \varnothing, \, \varnothing \,)$\;
 
	$Q${\LARGE .}$Enqueue(\, Root \, , \, t(Root) \,)$\;
 
	\Do{$\, Q \,\, is \,\, not \,\, empty$}{
		$(\, N \, , \, i \,) \, = \, Q${\LARGE .}$Dequeue()$\;
  
		\uIf{$(\, R^N[i] \, == \, 0 \,)$}{
			\lIf{$(\, A(N) > 0 \,)$}{
				${\cal G} \, \leftarrow \, N$
			}
			${\cal H} \, \leftarrow \, N$
		}
		\Else{
			$Q${\LARGE .}$Enqueue(\, N_r \, , \, R^N[i] \,)$\;
   
			$Q${\LARGE .}$Enqueue(\, N_l \, , \, i - R^N[i] \,)$
		}
	}
	\Return $(\, \cal H, \, G \,)$
	
\end{algorithm2e*}

\smallskip\noindent{\bf Notes:}
\begin{enumerate}
	\itemsep0pt%
	
	\item \label{itm:rangeb} The range $(\, r_{min} \, , \, r_{max} \,)$ is calculated as follows: $i$ is the number of nodes in the subset associated with $N$. 
 The number of nodes, $r$, associated with the right child should satisfy $1 \leqslant r \leqslant t(N_r)$. {(Note that the lower limit is 1 and not 0, because for b-consistency $N_s$ is a cut.)} The number of nodes $i-r$ associated with the left child should satisfy  
	$1 \leqslant i-r \leqslant t(N_l)$, which imples that $r$ should satisfy $i-t(N_l) \leqslant r \leqslant i-1$. 
 Therefore $r$ should be in the range $(\, r_{min} \, , \, r_{max} \,) = (\, \max( \, 1 \, , \, i-t(N_l) \,) \,\, , \, \min(\, t(N_r) \, , \, i-1 \,) \,)$.

	\item \label{itm:Hb}
	\begin{empty_env}
		\label{alg:b1_note2}
	\end{empty_env}
	$\cal H$ is a cut of $\,\cal T$ (Section \ref{sec:introduction:hierarchies}), which implies that the deepest node is no deeper than $|{\cal H}| - 1$. Hence, Algorithm \ref{alg:b1} can be accelerated by processing only those nodes whose depth is less than $k$. 
	
\end{enumerate}

\subsubsection{c-consistency}
\label{sec:auxAlgC}

A similar auxiliary algorithm for c-consistency is given in Algorithm \ref{alg:c1}.

Note that a c-best subset ${\cal H}^N\!(i)$ consists of disjoint nodes, but their union is not necessarily $N.$ %
For example, for ${\cal H}^N\!(1)\,,$ there are three possibilities: the best node from ${\cal T}^{N_r}\!,$ the best node from ${\cal T}^{N_l}\!,$ and $N$ itself, which are marked by {\em ad-hoc} values of 1,\,0, and -1, respectively, in $R^N[1].$

\begin{algorithm2e*}
	\caption{Auxiliary algorithm for c-consistency}\label{alg:c1}
	\DontPrintSemicolon
	\setcounter{AlgoLine}{0}
	\For{$N = N_1, \, N_2, \, N_3, \, \dots, \, N_{|{\cal{T}}|}$}{
		$B^N[0] = 0$\;
  
		\If(\tcp*[f]{Skip the leaves})
		{$(\, N \,\, is \,\, not \,\, a \,\, leaf \,)$}{
			\For(\tcp*[f]{See note \ref{itm:rangec}, Sec.~\ref{sec:auxAlgC}}){$i = 1, \dots, t(N)$}{
				$(\, r_{min} \, , \, r_{max} \,) \,=\, (\, \max( \,0 \, , \, i-t(N_l) \,) \,\, , \,\, \min(\, t(N_r) \, , \, i \,) \,) \enskip $\;
    
				$(\, B^N \, , \, R^N \,)[i] \,=\, \maxargmax \limits_{r_{min} \,\, \leqslant \,\, %
					r \,\, \leqslant \,\, r_{max}} (\, B^{N_r}[r] + B^{N_l}[i-r] \,)$
			}
		}
		\If%
		{$(\, N \,\, is \,\, a \,\, leaf \quad \mathbf{or} \quad A(N) \, \geqslant \, B^N[1] \,)$}{
		    \tcp{If $N\!$ is\! not a\! leaf\!, $\!\!B^N\![1]$\, is already initialized}
			$(\, B^N \, , \, R^N \,)[1] \,= (\, A(N) \, , \, -1 \,)$
		}
	}
	${\cal H} \, = \, \varnothing$\; %
	$Q${\LARGE .}$Enqueue(\, Root \, , \, t(Root) \,)$\;  %
	\Do%
	{$\, Q \,\, is \,\, not \,\, empty$}{
		$(\, N \, , \, i \,) \, = \, Q${\LARGE .}$Dequeue()$\;
  
		\lIf{$(\, R^N[i] \, == \, -1 \,)$}{
			${\cal H} \, \leftarrow \, N$
		}
		\Else{
			\lIf{$(\, R^N[i] \, > \, 0 \,)$}{
				$Q${\LARGE .}$Enqueue(\, N_r \, , \, R^N[i] \,)$
			}
			\lIf{$(\, R^N[i] \, < \, i \,)$}{
				$Q${\LARGE .}$Enqueue(\, N_l \, , \, i - R^N[i] \,)$
			}
		}
	}
	\Return $\cal H$
	
\end{algorithm2e*}

\noindent{\bf Notes:}
\begin{enumerate}
	\itemsep0pt%
	
	\item \label{itm:rangec} The calculation of $(\, r_{min} \, , \, r_{max} \,)$ is as above, but the ranges of both children start from 0.
	
	\item For coding convenience, we added a cell $B^N[0],$ which always takes the value 0.
	\item \label{itm:algoc} The algorithm should preferably select only nodes with a positive attribute. If the number of nodes with a positive attribute (in one layer) is less than $k$, then nodes with a non-positive attribute are selected as well. In this case, however, there is a subset with fewer nodes and with a bigger benefit, which can be specified from $(B^{Root}, \, R^{Root})\,;$ see Remark \ref{remark:MinimalBestSubset}.
\end{enumerate}

\subsubsection{d-consistency}
\label{sec:auxAlgD}

The auxiliary algorithm for d-consistency is formally given in Algorithm \ref{alg:d1}.

Unlike the other consistencies, a d-best subset may contain nested nodes, which requires additional variables. Unlike the algorithms for the other consistencies, here we use all the vectors $B_+^N$, $B_-^N$, $R_+^N$,  and $R_-^N$, including the additional cells $B_+^N[0],$ $B_-^N[0],$ $R_+^N[0]$ and $R_-^N[0]$ which always take the value $0.$ By Lemma \ref{lemma:conclBest}, and using the notations introduced in section \ref{sec:auxAlgs}, ${\cal H}_+^N\!(i)$ is the subset having the maximal benefit from $C_i^{1+} \!\!=\! {\cal H}_+^{N_r}\!(r) \cup {\cal H}_+^{N_l}\!(i\!-\!r)$ and $C_i^{2+} \!\!=\! \{N\} \cup {\cal H}_-^{N_r}\!(r) \cup {\cal H}_-^{N_l}\!(i\!-\!1\!-\!r),$ over all possible values of $\,r.$ 

For getting ${\cal H}_-^N\!(i),$ the subset having the minimal benefit, we use similar expressions; see Algorithm~\ref{alg:d1}. 
It may happen that the calculation of $B_+^N[i]$ using $B_-^N[i-1]$  includes $A(N)$ twice. To avoid this problem, we calculate $B_+^N[i]\,,B_-^N[i]$ in two passes through all values of $\,i$, the second pass being in decreasing order of $\,i$ (see lines 6-18 in Algorithm.~\ref{alg:d1}).

\begin{algorithm2e*}
	\caption{Auxiliary algorithm for d-consistency}\label{alg:d1}%
	\DontPrintSemicolon
    \setcounter{AlgoLine}{0}
	
	\For{$N = N_1, \, N_2, \, N_3, \, \dots, \, N_{|{\cal{T}}|}$}{
		$(\, B_+^N \, , \, B_-^N \, , \, R_+^N \, , \, R_-^N \,)[0] = (\, 0 \, , \, 0 \, , \, 0 \, , \, 0 \,)$\;
  
		\uIf{$(\, N \,\, is \,\, a \,\, leaf \,)$}{
			$(\, B_+^N \, , \, R_+^N \, , \, Belong_+^N \, , \, B_-^N \, , \, R_-^N \, , \, Belong_-^N \,)[1] = (\, A(N) \, , \, 0 \, , \, true \, , \, A(N) \, , \, 0 \, , \, true \,)$
		}
		\Else{
			\For(\tcp*[f]{The first pass}){$i = 1, \dots, t(N)$}{
				$(\, r_{min} \, , \, r_{max} \,) \, \,=\, (\, \max( \,0 \, , \, i-t(N_l) \,) \,\, , \, \min(\, t(N_r) \, , \, i \,) \,)$\;
    
				$(\, B_-^N \, , \, R_-^N \, )[i] \,=\, \minargmin \limits_{r_{min} \,\, \leqslant \,\, %
					r \,\, \leqslant \,\, r_{max}} \, (\, B_-^{N_r}[r] + B_-^{N_l}[i-r] \,)$\;
     
				$(\, B_+^N \, , \, R_+^N \, )[i] \,=\, \maxargmax \limits_{r_{min} \,\, \leqslant \,\, %
					r \,\, \leqslant \,\, r_{max}} (\, B_+^{N_r}[r] + B_+^{N_l}[i-r] \,)$
			}
			\uIf%
			{$(\, N \,\, is \,\, not \,\, the \,\, Root \,)$}{
				\For(\tcp*[f]{The second pass in decreasing order}){$i = t(N), \dots, 1$}{
					\lIf{$(\, A(N)-B_+^N[i-1] \, \geqslant \, B_-^N[i] \,)$}{
						$Belong_-^N[i] \,=\, false$
					}
					\lElse{
						$(\, B_-^N \, , \, R_-^N \, , \, Belong_-^N \,)[i] \,=\, (\, A(N)-B_+^N[i-1] \, , \, R_+^N[i-1] \, , \, true \,)$
					}
					\lIf{$(\, A(N)-B_-^N[i-1] \, \leqslant \, B_+^N[i] \,)$}{
						$Belong_+^N[i] \,=\, false$
					}
					\lElse{
						$(\, B_+^N \, , \, R_+^N \, , \, Belong_+^N \,)[i] \,=\, (\, A(N)-B_-^N[i-1] \, , \, R_-^N[i-1] \, , \, true \,)$
					}
				}
			}
			\Else{
				\lIf{$(\, A(Root) \, \leqslant \, B_+^{Root}[1] \,)$}{
					$Belong_+^{Root}[1] \,=\, false$
				}
				\lElse{
					$(\, B_+^{Root} \, , \, R_+^{Root} \, , \, Belong_+^{Root} \,)[1] \,=\, (\, A(Root) \, , \,0 \, , \, true \,)$
				}
			}
		}
	}
	$\widetilde{\cal H} \, = \, \varnothing$\;
 
	$Q${\LARGE .}$Enqueue(\, Root \, , \, t(Root) \, , \, 0 \,)$\;
 
	\Do{$\, Q \,\, is \,\, not \,\, empty$}{
		$(\, N \, , \, i \, , \, i^{^{_N}}_{_{\cal H}} \,) \, = \, Q${\LARGE .}$Dequeue()$\;
  
		\lIf{$(\, i^{^{_N}}_{_{\cal H}} \,\, is \,\, even \,)$}{
			$(\, belong \, , \, r \,) \, = \, (\, Belong_+^N[i]\, , \, R_+^N[i]\,)$
		}
		\lElse{
			$(\, belong \, , \, r \,) \, = \, (\, Belong_-^N[i]\, , \, R_-^N[i]\,)$
		}
		\lIf(\tcp*[f]{Post-Increment of $i^{^{_N}}_{_{\cal H}}$}){$(\, belong \,)$}{
			$\widetilde{\cal H} \, \leftarrow \, (\, N \, , \, i^{^{_N}}_{_{\cal H}}\!$++$\,)$
		}
		\lIf{$(\, r \, > \, 0 \,)$}{
			$Q${\LARGE .}$Enqueue(\, N_r \, , \, r \, , \, i^{^{_N}}_{_{\cal H}} \,)$
		}
		\lIf(\tcp*[f]{$\!belong\!:\,\,$true\,=\,1\,,\,false\,=\,0}){$(\, r + belong \, < \, i \,)$}{
			$Q${\LARGE .}$Enqueue(\, N_l \, , \, i - belong - r \, , \, i^{^{_N}}_{_{\cal H}} \,)$
		}
	}
	\Return $\widetilde{\cal H}$
	
\end{algorithm2e*}

The d-best subset $\cal H$ is specified from $R^N$ as before. However, since both a node $N$ and nodes that are nested in it, may be included in $\cal H\,,$ we added an indicator $Belong_+^N[i]\,/\,Belong_-^N[i] \quad i \!=\! 1, \dots, t(N)\,$ (boolean vector variables stored in a node $N$), indicating whether $N$ belongs to ${\cal H}_+^N\!(i)\,/\,{\cal H}_-^N\!(i).$ In addition, for every node $N \!\in\! \cal H,$ the index $i^{^{_N}}_{_{\cal H}}$ (Section \ref{sec:SegFrBk}) is calculated, so Algorithm \ref{alg:d1} returns a subset $\,\widetilde{\cal H}\,,$ which is a subset of the pairs $(N\,,\, i^{^{_N}}_{_{\cal H}}).$
\begin{remark}\label{remark:MinimalBestSubset}
	
$\cal H$ is not necessarily of  minimal size, and in extreme case, when the number of nodes with positive attribute is too small,  it does not provide the best benefit.  (See Remark \ref{remark:BestSubsetsWithSameBenefitAndDifferentSizes} and Note \ref{itm:algoc} in section \ref{sec:auxAlgC}). The best subset with the best benefit and minimal size, is always associated with the maximal value in $B^{Root}\!.$
It can be specified by running the queue starting from $Q${\LARGE .}$Enqueue(\, Root \, , \, k' \,)$ ($\,k'$ replaces $t(Root)\,$), where $k'$ is the minimal index such that the value $B^{Root}[k']$ is the maximal in $B^{Root}\!.$
\end{remark}

\subsubsection{Time complexity}\label{sec:complexity}

For our auxiliary algorithms, the vector variable size is  bounded by $k.$ 
The vector variables of a node $N$ may be calculated in $O(\min (k,|{\cal L}^{N_r}|) \!\cdot\! \min (k,|{\cal L}^{N_l}|))$ time. For the common case, where $k<< |{\cal L}|$, this amounts to $O(k^2)$, and is independent of the tree size. 
The algorithm linearly scans all the nodes and requires $O(|{\cal L}| \cdot (\! \min (k, \log |{\cal L}|) )^2\,)$ 
time.
This includes the time required to get the best subset from the node vectors.

The full algorithm starts by calculating the node dimensions $(b^{^{_N}},f^{^{_N}}).$ First, these dimensions are calculated for the leaves of $\,\cal T$ in $O(|I|)$ time, and then propagated to the rest of the nodes in linear time.  Overall, this calculation takes $O(|I|+|{\cal T}|) = O(|I|+|{\cal L}|)$ time.

Thus, the total time complexity is $O(|I| \, + \, n \cdot |{\cal L}| \cdot \big{(} \! \min (k, \log |{\cal L}|) \big{)}^2\,)$, where $n$ is the number of iterations made by scheme \ref{sch:main}.
The straightforward (and least tight) upper-bound on $\,n\,$ is the number of segmentations $s \!\in\! \cal S$ with different scores ${\cal M}(s)$ (the measure maximized in scheme \ref{sch:main}), since ${\cal M}(s)$ strictly increases from iteration to iteration (Section \ref{sec:ourapproach}). However, in practice, we found that only a few iterations are required (no more than five).

\subsubsection{The best segmentation specified by a subset of unlimited size}\label{sec:best_pruning}

Sometimes, we are interested in a segmentation $s \!\in\! \cal S$ achieving the best score ${\cal M}(s),$ regardless of the size of
a subset ${\cal N}_s.$ Then, the auxiliary algorithm becomes linear, and is  significantly simpler. Lemma \ref{lemma:consist:equiv} implies that in this case, optimizing $\,\cal M$ yields $s \!\in\! \cal S$ with the same score ${\cal M}(s)$, for each of the consistency types b/c/d. By simply discarding the node subset size parts, the b-consistency algorithm can be simplified to be  particularly efficient. Algorithm \ref{alg:best_val} provides the full description.

In every node $N,$ we store only the maximal benefit over all b-best subsets in ${\cal T}^N\!,$ regardless of their sizes. That is, we need only a scalar variable $p^N\!,$ storing the maximal value in the vector $B^N$ in Algorithm \ref{alg:b1}. After the values $p^N\!$ are calculated for all nodes, the b-best subset $\cal H$ is found as the optimal cut of $\,\cal T.$ 
In this case, $\cal H$ has the minimal size (see Remark \ref{remark:MinimalBestSubset}), {\em i.e.}, there is no b-best subset in $\cal T,$ that has the same benefit, while being smaller.

Processing of each node is in $\,O(1)\,;$ hence, the time complexity of this algorithm is $O(|{\cal T}|) = O(|{\cal L}|)$. Note that Algorithm \ref{alg:best_val} returns two subsets: $\cal H$ and $\cal G \!\subset\! H$ (the nodes with a positive attribute).

\begin{algorithm2e}[!h]
	\caption{Auxiliary algorithm for finding the best segmentation specified by an unlimited subset}
	\label{alg:best_val}
	\DontPrintSemicolon
	
	\For{$N = N_1, \, N_2, \, N_3, \, \dots, \, N_{|{\cal{T}}|}$}{
		\If{$(\, N \,\, is \,\, a \,\, leaf \,)$}{
			$p^N = \max (\, A(N)\,,\,0 \,)$
		}
		\lElse{
			$p^N = \,p^{N_r} +\, p^{N_l}$
		}
	}
	$(\, \cal H, \, G \,) \, = \, (\, \varnothing, \, \varnothing \,)$\;
	$Q${\LARGE .}$Enqueue(\, Root \,)$\;
	\Do{$\, Q \,\, is \,\, not \,\, empty$}{
		$N \, = \, Q${\LARGE .}$Dequeue()$\;
		\uIf%
		{$(\, \max (\, A(N)\,,\,0 \,) \, \geqslant \, p^N \,)$}{
			\lIf{$(\, A(N) > 0 \,)$}{
				${\cal G} \, \leftarrow \, N$
			}
			${\cal H} \, \leftarrow \, N$
		}
		\Else{
			$Q${\LARGE .}$Enqueue(\, N_r \,)$\;
			$Q${\LARGE .}$Enqueue(\, N_l \,)$
		}
	}
	\Return $(\, \cal H, \, G \,)$
	
\end{algorithm2e}

\subsubsection{Auxiliary algorithms' correctness}
\label{sec:Correctness}

\begin{theorem}
    The auxiliary algorithms optimize the auxiliary measure \eqref{eq:Pj_attr} subject to the corresponding consistency, and the constraint on the maximal number of nodes in ${\cal N}_s$.  
\end{theorem}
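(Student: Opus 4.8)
The plan is to prove correctness separately for each of the three auxiliary algorithms (b-, c-, and d-consistency) by structural induction on the tree, exploiting the recursive decomposition established in Lemmas \ref{lemma:Psubset}, \ref{lemma:best-worst}, and \ref{lemma:conclBest}. The common inductive invariant is: after the post-order visit of a node $N$, the vectors $B^N[i]$ (and in the d-case $B_+^N[i], B_-^N[i]$) hold the benefit of a best (resp. worst) subset of size $i$ in the subtree ${\cal T}^N$ satisfying the relevant consistency constraint, and the companion vectors $R^N[i]$ (resp. $R_\pm^N[i]$, $Belong_\pm^N[i]$) encode enough information to reconstruct that subset. The base case is each leaf, handled at initialization, where the only admissible subset of size $1$ is the leaf itself. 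The inductive step at a non-leaf node $N$ must show that the recurrence scanning over the split $r$ of the budget between the children (lines computing $\maxargmax/\minargmin$) realizes exactly the two cases of Lemma \ref{lemma:conclBest}: case (a) where $N \notin {\cal N}$ (union of child subsets) and case (b) where $N \in {\cal N}$ (node $N$ together with appropriately best/worst child subsets).

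For the b-consistency algorithm I would first verify that every subset the recurrence considers is a $\cal T$-partition of $N$, so that the output ${\cal H}^{Root}[k]$ is a genuine cut; the key point is that $N \in {\cal H}^N[i]$ iff $i=1$, which forces the recurrence for $i>1$ to use only Lemma \ref{lemma:conclBest}(a). I would then confirm the benefit formula matches Property \ref{def:benefit} for b-consistency (summing only positive attributes), handled by the $\max(A(N),0)$ at the leaves and by the reconstruction pass that places a node into $\cal G$ exactly when $A(N)>0$. The c-consistency proof is analogous but must additionally account for the third option in $R^N[1]=-1$ (taking $N$ itself as a single disjoint node rather than descending), and for the child budget ranges starting at $0$ since a child may contribute no nodes; here correctness reduces to checking that the three candidates for ${\cal H}^N(1)$ are exhaustive and that disjointness is preserved, which follows because distinct selected nodes lie in disjoint subtrees.

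The d-consistency argument is the main obstacle, and I would devote most of the effort to it. The difficulty is twofold: the admissible subsets may contain nested nodes, so both $B_+^N$ and $B_-^N$ are needed (reflecting the alternating $(-1)^{i^N_{\cal N}}$ signs in Eq.~\eqref{def:dim} and Lemma \ref{lemma:Psubset}(b)), and the two-pass computation must be shown to avoid the double-counting of $A(N)$ flagged in the text. The crux is the invariant that after the first pass $B_\pm^N[i]$ holds the benefit of the best/worst subset \emph{not containing} $N$ (the $C_i^{1\pm}$ candidates of Lemma \ref{lemma:conclBest}(a)), so that when the second pass, running in decreasing $i$, forms the candidate $A(N)-B_\mp^N[i-1]$ (the $C_i^{2\pm}$ of Lemma \ref{lemma:conclBest}(b)), the value $B_\mp^N[i-1]$ is guaranteed to be $N$-free; I would prove this by showing that decreasing-$i$ order ensures $B_\mp^N[i-1]$ has not yet been overwritten by a candidate that incorporates $N$. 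Finally I would verify the reconstruction loop: the parity of the accumulated layer index $i^N_{\cal H}$ correctly selects between $Belong_+/R_+$ and $Belong_-/R_-$, matching the layer structure ${\cal K}^i_{\cal N}$ of Section \ref{sec:SegFrBk}, and that skipping the second pass at the root correctly prevents a spurious set-difference from the root, as required by the $J$ versus $J^c$ choice of Section \ref{sec:SegDim}. Correctness subject to the budget constraint $|{\cal N}_s|\leqslant k$ follows throughout from the truncation $t(N)=\min(k,|{\cal L}^N|)$ bounding the vector index and from the range $(r_{min},r_{max})$ enforcing feasible budget splits.
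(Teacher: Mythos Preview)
Your proposal is correct and follows the same approach as the paper: induction on the tree structure using Lemma~\ref{lemma:conclBest} as the inductive engine. The paper's own proof is a single sentence---``As each of the auxiliary algorithms recursively applies Lemma~\ref{lemma:conclBest}, the proof readily follows by induction on $Height(\cal T)$''---so your plan is far more detailed than what the authors supply, but the skeleton is identical. One minor inaccuracy: the $\max(A(N),0)$ initialization in Algorithm~\ref{alg:b1} is applied at \emph{every} node for index $1$, not only at the leaves; this does not affect the validity of your argument since for any node $N$ the only $\cal T$-partition of $N$ of size $1$ is $\{N\}$ itself.
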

As each of the auxiliary algorithms recursively applies Lemma \ref{lemma:conclBest}, the  proof readily follows by induction on $Height(\cal T).$

\subsubsection{A note on the implementation}\label{sec:accuracy}

Each of the auxiliary algorithms calculates the benefit of node subsets by performing arithmetic operations with natural numbers $b^{^{_N}}\!,f^{^{_N}}\!$ and the real number $\omega$. To avoid numerical error in the accumulation, we use integer arithmetic. We represent the benefit with two natural numbers, each of which is a linear combination of $\,b^{^{_N}}\!$ and  $f^{^{_N}}\!$ values, with $\,\pm 1$ coefficients. To compare the benefits of different subsets, we need only a single operation involving $\omega$. 

\section{Experiments}
\label{sec:experiments}

The contribution of this paper is mostly theoretical, in providing, for the first time,  effective algorithms for bounding the obtainable  Jaccard index quality of a segmentation. 
These bounds, depending on the hierarchy, the consistency, and the number of nodes, are experimentally illustrated below.

To the best of our knowledge, the optimization of the Jaccard index was not considered before, which  prevents us from comparing our empirical results with prior work.

\def\myfigsize{0.19}
\begin{figure*}[tbh]
    \centering
	\begin{tabular}[c]{ccc}
		\centering
		\begin{subfigure}[b]{\myfigsize\textwidth}
			\centering
			\includegraphics[width=\textwidth]{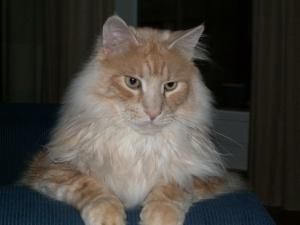}
			\caption{\footnotesize An Image of a cat}
		\end{subfigure}
		&
		\begin{subfigure}[b]{\myfigsize\textwidth}
			\centering
			\includegraphics[width=\textwidth]{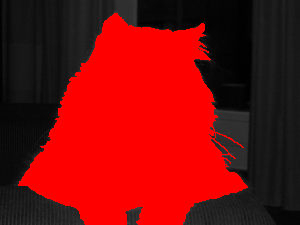}
			\caption{\footnotesize The Ground Truth}
		\end{subfigure}
		&
		\begin{subfigure}[b]{\myfigsize\textwidth}
			\centering
			\includegraphics[width=\textwidth]{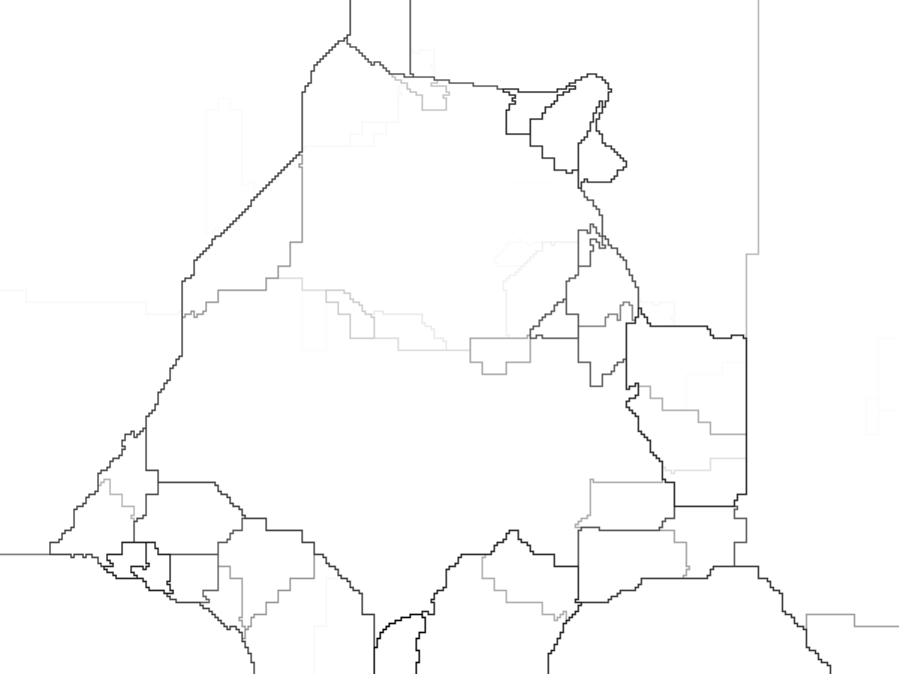}
			\caption{\footnotesize Saliency Map : HED}
		\end{subfigure}
        \\
		\begin{subfigure}[b]{\myfigsize\textwidth}
			\centering
			\includegraphics[width=\textwidth]{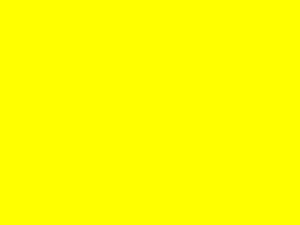}
            \caption{\footnotesize $J_{_b}[{\color{red} 1}] = 0.44$}
		\end{subfigure}
		&
		\begin{subfigure}[b]{\myfigsize\textwidth}
			\centering
			\includegraphics[width=\textwidth]{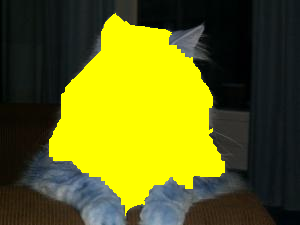}
			\caption{\footnotesize $J_{_c}[{\color{red} 1}] = 0.71$}
		\end{subfigure}
		&
		\begin{subfigure}[b]{\myfigsize\textwidth}
			\centering
			\includegraphics[width=\textwidth]{Images/hed_cat_c_1.png}
			\caption{\footnotesize $J_{_d}[{\color{red} 1}] = 0.71$}
		\end{subfigure}
        \\
		\begin{subfigure}[b]{\myfigsize\textwidth}
			\centering
			\includegraphics[width=\textwidth]{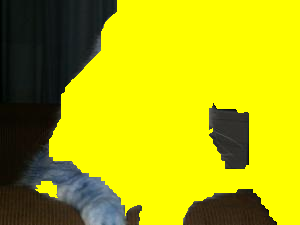}
            \caption{\footnotesize $J_{_b}[{\color{red} 5}] = 0.52$}
		\end{subfigure}
		&
		\begin{subfigure}[b]{\myfigsize\textwidth}
			\centering
			\includegraphics[width=\textwidth]{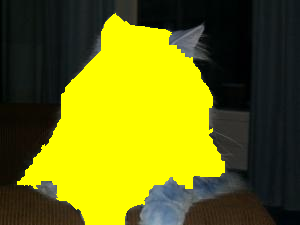}
			\caption{\footnotesize $J_{_c}[{\color{red} 5}] = 0.83$}
		\end{subfigure}
		&
		\begin{subfigure}[b]{\myfigsize\textwidth}
			\centering
			\includegraphics[width=\textwidth]{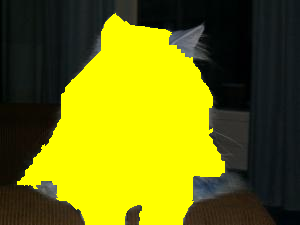}
			\caption{\footnotesize $J_{_d}[{\color{red} 5}] = 0.89$}
		\end{subfigure}
		\\
		\begin{subfigure}[b]{\myfigsize\textwidth}
			\centering
			\includegraphics[width=\textwidth]{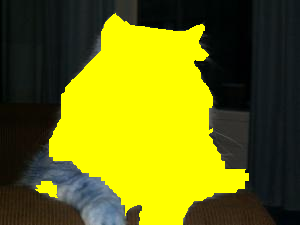}
            \caption{\footnotesize $J_{_b}[{\color{red} 10}] = 0.83$}
		\end{subfigure}
		&
		\begin{subfigure}[b]{\myfigsize\textwidth}
			\centering
			\includegraphics[width=\textwidth]{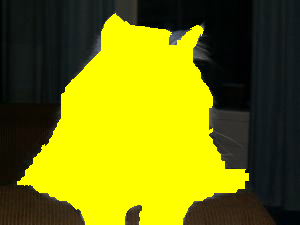}
			\caption{\footnotesize $J_{_c}[{\color{red} 10}] = 0.94$}
		\end{subfigure}
		&
		\begin{subfigure}[b]{\myfigsize\textwidth}
			\centering
			\includegraphics[width=\textwidth]{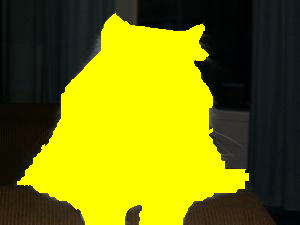}
			\caption{\footnotesize $J_{_d}[{\color{red} 10}] = 0.96$}
		\end{subfigure}
		\\
		\begin{subfigure}[b]{\myfigsize\textwidth}
			\centering
			\includegraphics[width=\textwidth]{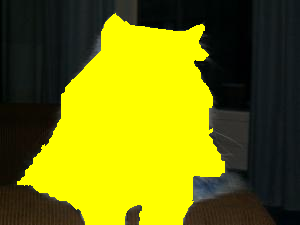}
            \caption{\footnotesize $J_{_b}[{\color{red} 20}] = 0.93$}
		\end{subfigure}
		&
		\begin{subfigure}[b]{\myfigsize\textwidth}
			\centering
			\includegraphics[width=\textwidth]{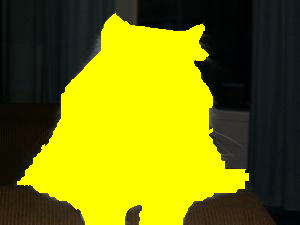}
			\caption{\footnotesize $J_{_c}[{\color{red} 20}] = 0.95$}
		\end{subfigure}
		&
		\begin{subfigure}[b]{\myfigsize\textwidth}
			\centering
			\includegraphics[width=\textwidth]{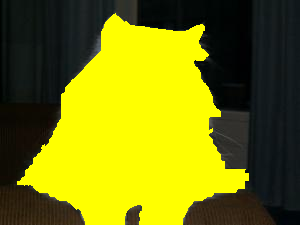}
			\caption{\footnotesize $J_{_d}[{\color{red} 20}] = 0.96$}
		\end{subfigure}
	\end{tabular}
	\caption{Hierarchy consistent optimal segmentations for the HED hierarchy. (a) original image (a cat image from the Weizmann database), (b) ground truth, (c) the saliency map for the HED hierarchy. The segmentations are calculated for the three b-, c-, and d-consistencies and for several numbers of nodes. Note that for a low number of nodes ({\em e.g.}, 5) the b-consistent segmentation (g) is of lower quality than the other segmentations (h)(i). Note also that the c-consistent segmentation (h) is slightly worse than the d-consistent one (i). The differences decrease when the number of nodes increases.\label{fig:examples}}
\end{figure*}

\begin{figure*}[tbh]
	\centering
	\begin{tabular}{ccc}
	      \includegraphics[width=0.3\textwidth]{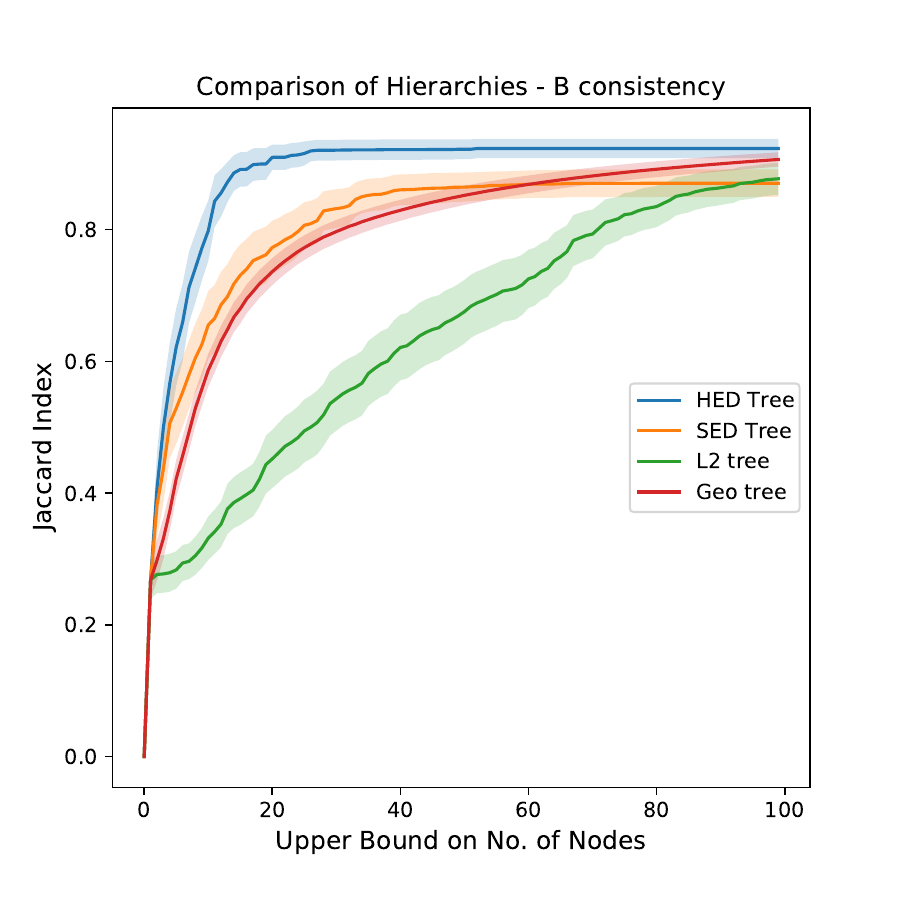}
	     &  
\includegraphics[width=0.3\textwidth]{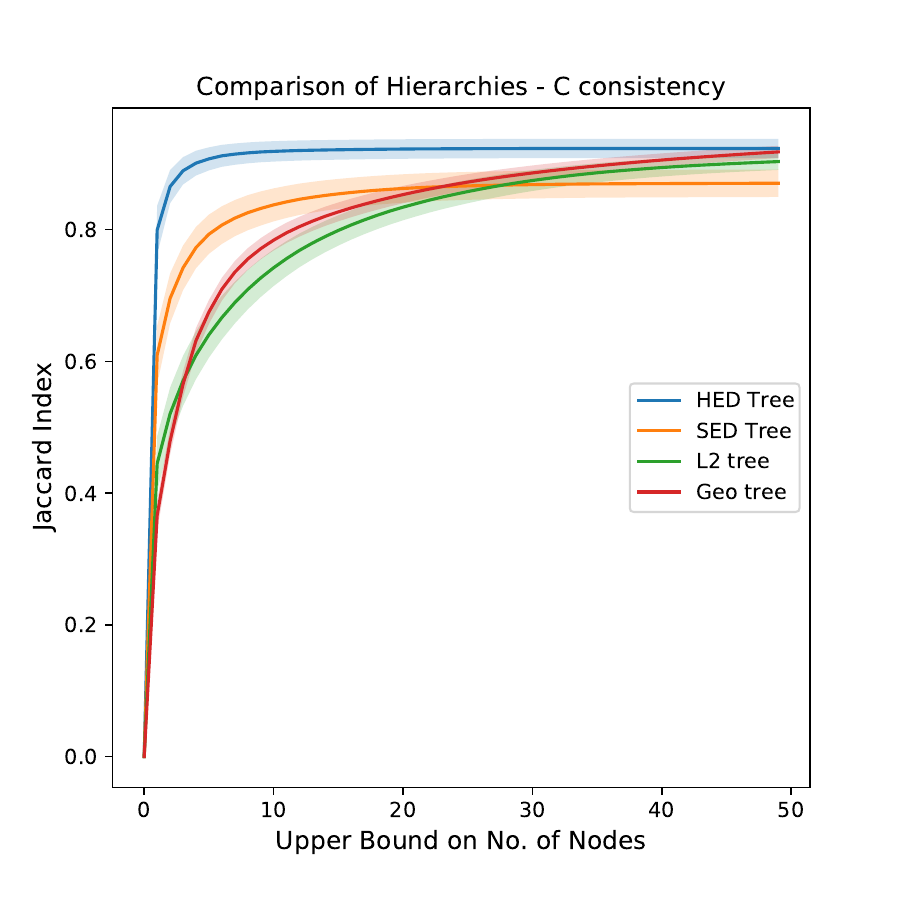}
	     & 
\includegraphics[width=0.3\textwidth]{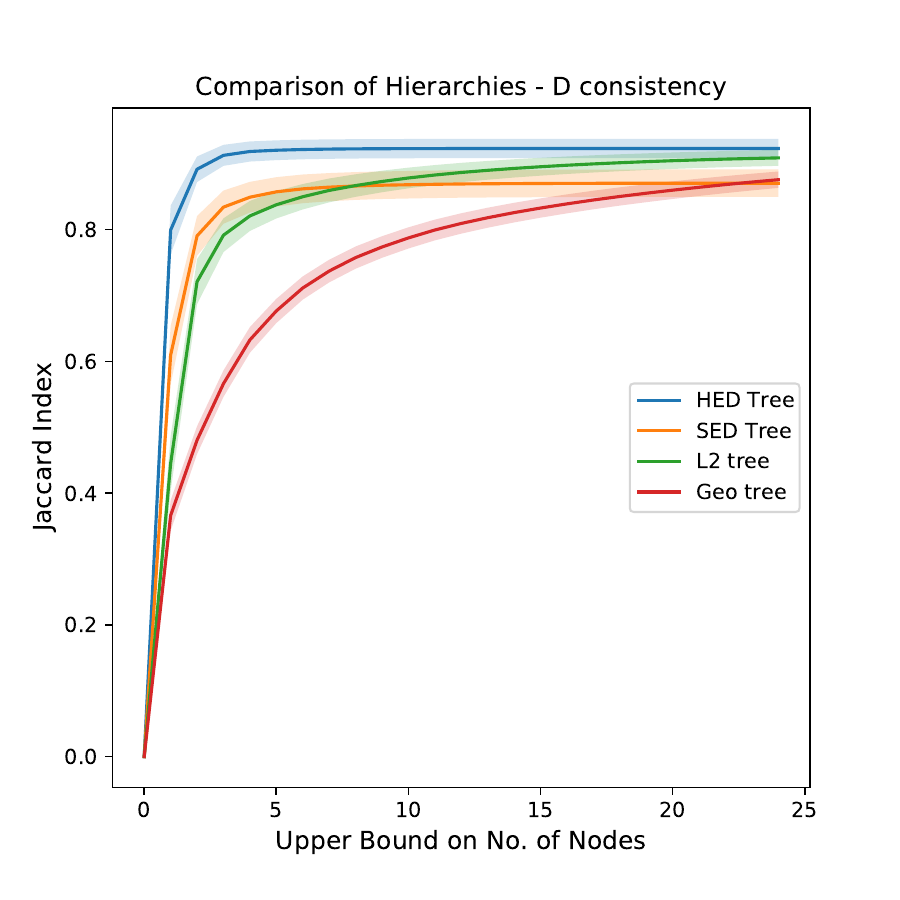}	\end{tabular}
\caption{An illustration of the maximal Jaccard index, obtainable for a given number of nodes. Every one of the plots correspond to one of the segmentation-hierarchy consistencies. The curves correspond to averages over all images in the Weizmann DB, to the four hierarchies.  Filtered hierarchies are used. The use of d-consistency clearly requires a significantly lower number of nodes for the same quality, relatively to the c-consistency, which, in turn, requires a lower number of nodes than the usage of the b-consistency. Also, as expected, the hierarchy built using the HED edge detector gives better results than the other hierarchies demonstrated here. \label{fig:experiments-filtered}}
\end{figure*}
	
For the experiments, we consider four BPT hierarchies. The first, denoted geometric tree, is image independent and  serves as a baseline. The other three hierarchies are created as the minimum spanning tree of a super-pixels graph, where the nodes are SLIC superpixels  \cite{achanta2012slic}. The weights of the graph are specified by different types of gradients.  
More specifically, we consider the following hierarchies:  
\begin{enumerate}
\item Geometric Tree (image-independent baseline) -- Starting from the root node (the entire image), each node split into two equal parts (the node children), horizontally or vertically, depending on whether the height or the width of the node is larger. 
Note that the geometric tree is independent of the image content.

\item L2 Tree –- based on traditional, low quality non-learned gradient: the L2 difference between the RGB color vectors. 

\item SED Tree -- based on learned,  Structured Forests Edge detection, which can be considered medium quality \cite{6751339}. 

\item HED Tree -- Modern, high quality, deep learning based, Holistically-Nested Edge Detector~\cite{xie15hed}. 
\end{enumerate}

A common issue with hierarchical image segmentation is the presence of small regions (containing few pixels) at lower depths in the hierarchy. These small regions are found more frequently when generating the HED  and SED trees, as their gradient generally contains thick boundaries.
It is therefore common to filter the hierarchy and to remove such small unwanted regions; see, {\em e.g.}, the implementation of \cite{felzenszwalb2004efficient} and \cite{baltaxe2016local, perret2019removing}. We followed this practice and use the Higra \cite{perret2019higra} area-based filtering algorithm proposed in \cite{perret2019removing}.

The leaves of the image independent,  geometric-tree are the image pixels, which makes this tree large (and regular). The other trees are smaller, as they use super pixels, and also benefit from the filtering process, when applied. 

We calculated the best segmentations that match the different hierarchies, and show how they depend on the particular hierarchy that is used, and on the consistency type. First, we show several examples of such best segmentations, corresponding to the same image, using the HED hierarchy; see Figure \ref{fig:examples}. 
As expected, the segmentation quality improves with the number of nodes that are used, and with the consistency type ($b<c<d$). 

Figure \ref{fig:experiments-filtered} confirms this observation, and shows that the average Jaccard index, over an image dataset, grows with the number of hierarchy nodes. It also shows that requiring  d-consistency allows us to use a relatively small number of nodes for getting good segmentation, with a high Jaccard index. C-consistency follows, and b-consistency is last.  The differences between the consistencies are clearly seen in Figure
\ref{fig:experiments-filtered-byhierarchy}.
It is also clear that better hierarchies, obtained with more accurate edge detectors, provide much higher quality of segmentation with lower number of nodes.
These plots show the average Jaccard index over  $100$ images of the Weizmann database \cite{alpert2012image}. Every image in this database contains a single object over a background, which match the applicability of the Jaccard index.

\begin{figure*}[tbh]
	\centering
	\begin{tabular}{cc}
	      \includegraphics[width=0.3\textwidth]{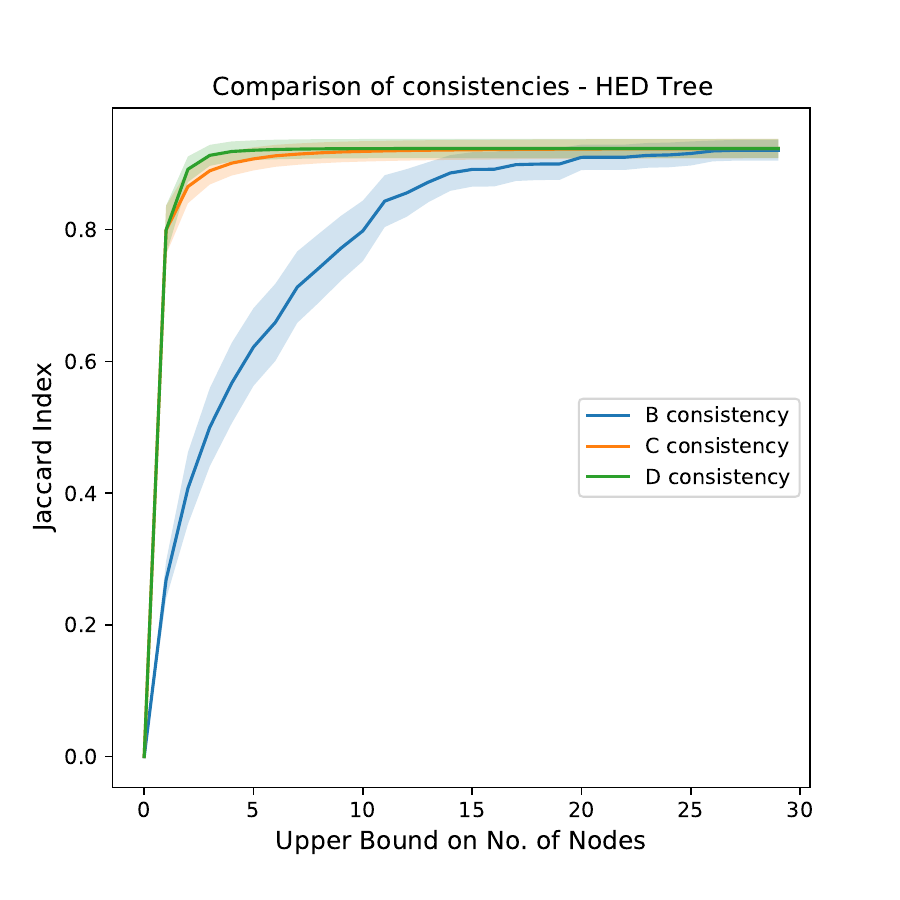}
	     &  
\includegraphics[width=0.3\textwidth]{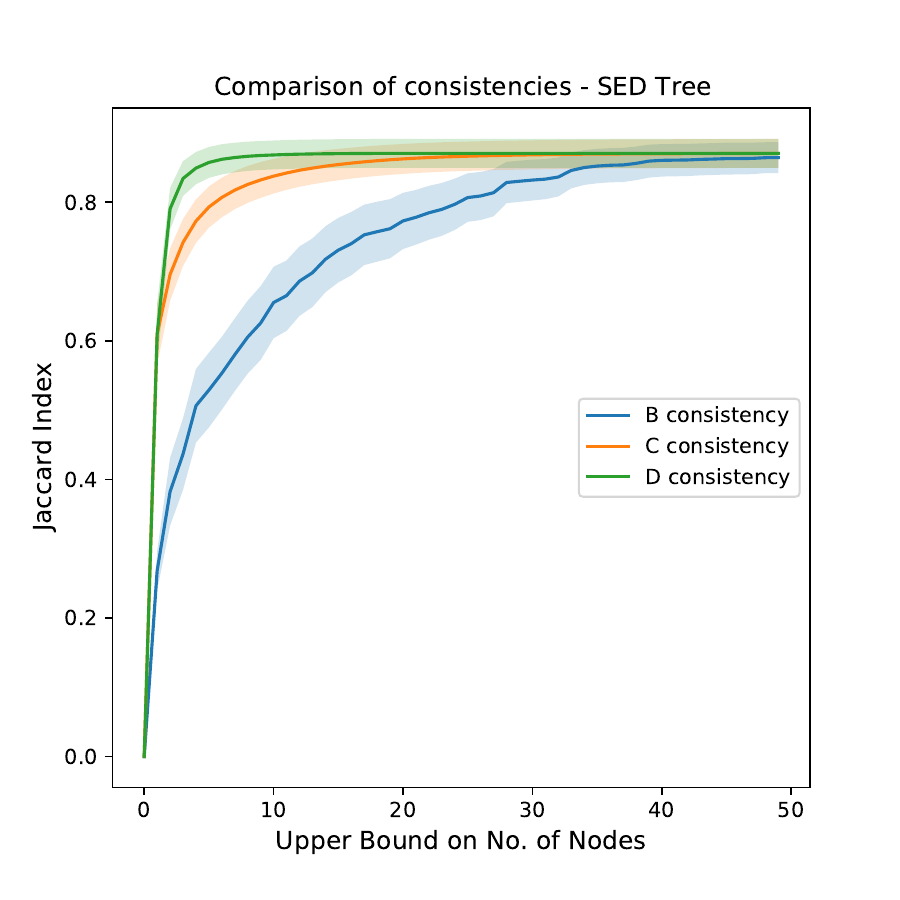} \\
	     
\includegraphics[width=0.3\textwidth]{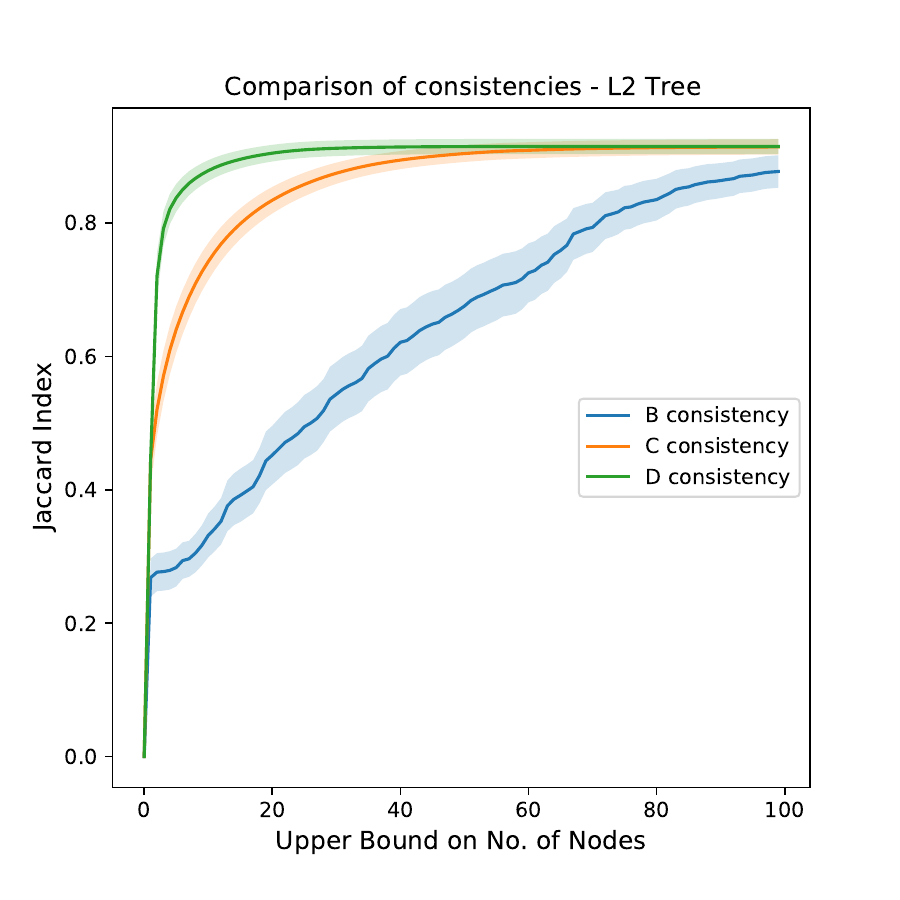} 
        & 
\includegraphics[width=0.3\textwidth]{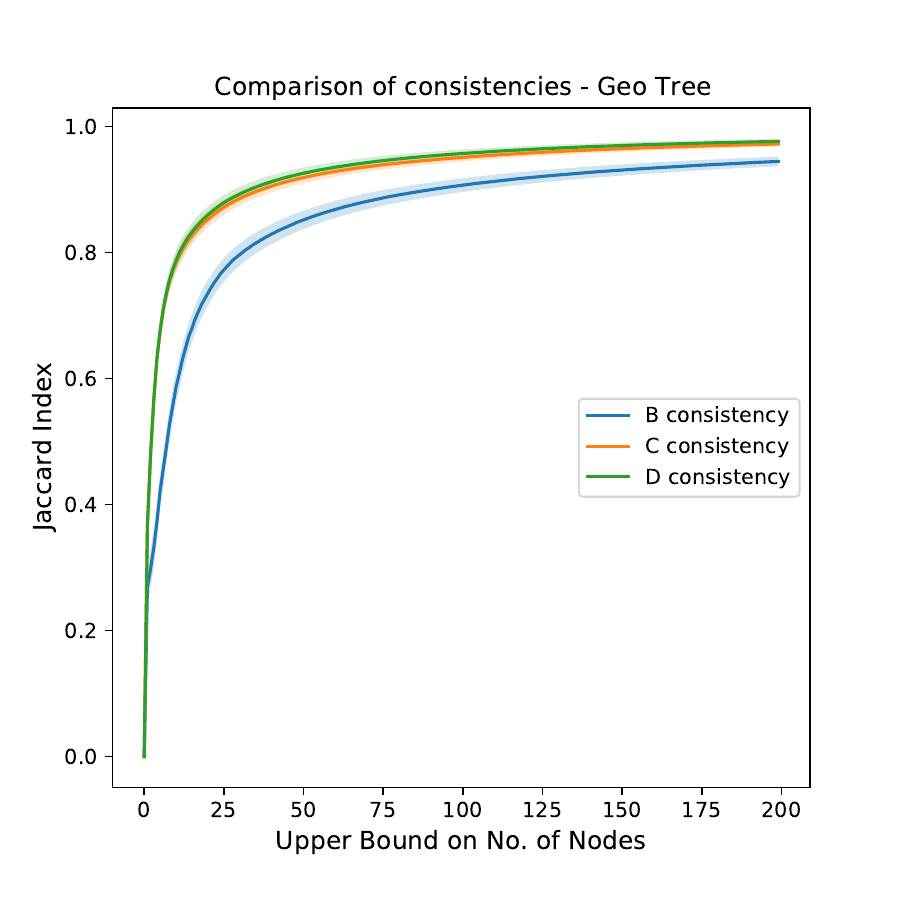}.  \end{tabular}
\caption{Comparing the segmentation quality obtainable using the three segmentation-hierarchy consistencies. The comparison is carried out for each of the four different types of hierarchies. Note that the performance with c and d consistencies is similar for the best, HED, tree. For better visibility, we used a different scale in the x-axis, for each of the hierarchy. \label{fig:experiments-filtered-byhierarchy}}
\end{figure*}

The average Jaccard index curves are smooth. We observed however that for particular images, the curves have stair-like behavior, implying that the same Jaccard index is achieved for different $k$ values, which happens, {\em e.g.}, when adding a few nodes to $N_s$ does not change the foreground specification.

Note that for b-consistency the geometric, image-independent  tree is better than say the L2 tree. This happens because in the L2 tree,  we have many spurious small nodes that are close to the root, even after the filtering process. B-consistency chooses a set of nodes which is a cut in the tree; when taking a cut that contains the important nodes needed to approximate the GT segment, some spurious nodes must be included, which significantly increases the node count ($k$). 

The best results (in terms of lowest node count) are achieved with d-consistency. This holds for all hierarchies. For the higher quality hierarchies, the node count needed for excellent quality is remarkably low (only four on average). 
We found that even if the hierarchy contains errors such as incorrect merges and small nodes near the root, segmentations specified by d-consistency still require a small node count. To illustrate this robustness property, consider the case where one incorrect merge was done; see Figure \ref{fig:faulty-hierarchy}.
This merge leads to a sequence of modes that are not purely foreground or background. In this example, the b-consistent foreground segment is specified by the cut containing 6 nodes (A,B,\dots,F), c-consistency requires one node less (A,B, \dots, E), while d-consistency requires only 2 nodes (K and F). This robustness is significant because the hierarchy is constructed usually by an error-prone, greedy process. By using d-consistency, the harm made by the greedy process can be compensated to some extent. 
Note that when using the geometric tree, the segmentation qualities obtained by c and d consistencies are not very different; see Figure \ref{fig:experiments-filtered-byhierarchy}. The merging errors made by the geometric tree are numerous, and happen in all hierarchy levels, therefore they cannot be corrected by a few set-difference operations.     

The experiments are meant only to be illustrative, and are not the main contributions of this paper. Several surprising findings are observed, however. First, it turned out that for approximating a segment, in the Jaccard index sense, the geometric tree provide reasonable results, which are often as good as some of the others trees (but not of the modern HED tree). Note that while all the nodes in this case are image-independent rectangles, the nodes that were selected for the approximation are based on the (image-dependent) ground truth segmentation. 
We also found that the hierarchies based on the SED edge detector are not as good as we could have expected. This was somewhat surprising because previous evaluations of the SED show good results (F-number=0.75, on BSDS \cite{6751339}). Overall, these results imply that hierarchies built greedily are sensitive to the gradient that is used. 

\begin{figure*}[tbh]
	\centering
	\begin{tikzpicture}[node distance={15mm}, main/.style = {draw, circle}] 
\node[main,fill=green] (E) {E}; 
\node[main,fill=red] (F) [right of=E]{F}; 
\node[main,fill=yellow] (G) [above left of=F] {G}; 
\node[main,fill=yellow] (H) [above left of=G] {H}; 
\node[main,fill=green] (D) [left of=G] {D}; 
\node[main,fill=yellow] (H) [above left of=G] {H}; 
\node[main,fill=yellow] (I) [above left of=H] {I}; 
\node[main,fill=yellow] (J) [above left of=I] {J}; 
\node[main,fill=yellow] (K) [above left of=J] {K}; 
\node[main,fill=green] (C) [left of=H] {C}; 
\node[main,fill=green] (B) [left of=I] {B}; 
\node[main,fill=green] (A) [left of=J] {A};  

\draw (E) -- (G);
\draw (F) -- (G);
\draw (G) -- (H);
\draw (D) -- (H);
\draw (H) -- (I);
\draw (I) -- (J);
\draw (C) -- (I);
\draw (J) -- (K);
\draw (B) -- (J);
\draw (A) -- (K);

\end{tikzpicture} 

\caption{Consistency-robustness against incorrect mergings -- An example of a hierarchy, with several nodes in the foreground (A,\dots,E) and one node in the background (F), which is merged incorrectly with E. Expressing the foreground using this hierarchy requires 6,5, and 2 nodes in the b-,c-, and d-consistency, respectively; see text. \label{fig:faulty-hierarchy}}
\end{figure*}
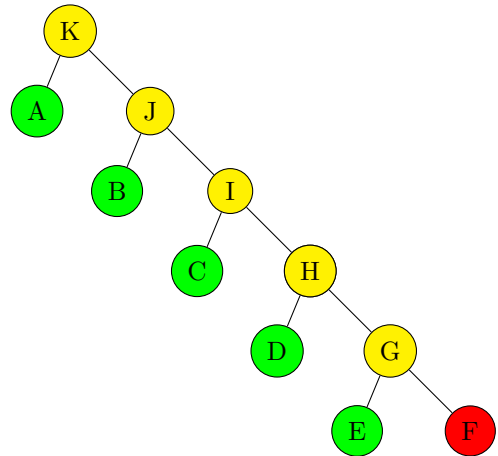

\section{Conclusions}\label{sec:conclusions}

This paper considered the relation between the hierarchical representation of an image and the segmentation of this image. It proposed that a segmentation may depend on the hierarchy in 4 different ways, denoted consistencies. The higher level consistencies are more robust to hierarchy errors, which allows us to describe the segmentation in a more economical way, use fewer nodes, relative to the lower-level consistencies that are commonly used. 

While the common a-consistency requires that every segment is a separate node in a hierarchy cut, using b-consistency allows to describe segments that were split between different branches of the hierarchy. The c- and d-consistency no longer require that the segmentation is specified by a cut, and this way can ignore, non-important small nodes. The d-consistency can even compensate for incorrect merges that occurred in the (usually greedy) construction of the hierarchy. We found, for example, that fairly complicated segments 
can be represented by only 3-5 nodes of the tree, using the hierarchy built with a modern edge detector (HED \cite{xie15hed}) and d-consistency. This efficient segment representation opens the way to new algorithm  for analyzing segmentation and searching for the best one. Developing such algorithms seems nontrivial and is left for future work.

The number of nodes required to describe a segmentation is a measure of the quality of the hierarchy. A segmentation may be accurately described by a large number of leaves of almost any hierarchy. For describing the segmentation with a few nodes, however, the hierarchy should contain nodes that correspond to the true segments, or at least to a large fraction of them. Thus, this approach is an addition to the variety of existing tools that were proposed for hierarchy evaluation. 

Technically, most of this paper was dedicated to deriving rigorous and efficient algorithms for optimizing the Jaccard index. For this complex optimization, the co-optimality tool was introduced. We argue that with this tool, other measures of segmentation quality, such as the boundary-based $F_b$ measure \cite{martin2003empirical} considered in  \cite{pont2012supervised}, may be optimized more efficiently, and propose that for future work as well.  

\bmhead{Acknowledgments}
This work was done when the first author was with the Math dept., Technion, Israel and the second author was with CSE. dept., IIT Delhi.

\bibliography{report2}

\begin{appendices}

\section{Proof of Lemma 1}\label{appendix:B}

\noindent 
{\bf Lemma \ref{lemma:coarsest-T-partition}}  (repeated) 

\begin{enumerate}
\item[\it i.] A $\,\cal T$-partition of a pixel subset $\,Y \! \subset I\,$ is non-coarsest, if and only if, it contains a non-coarsest $\,\cal T$-partition of some node $\,N \! \in \! \cal T$ that is included in $\,Y$ ($\,N \! \subset Y$).
			
\item[\it ii.] When the coarsest $\,\cal T$-partition of a pixel subset $\,Y \! \subset I\,$ exists, it is unique.
\end{enumerate}

\begin{proof}[{\bf Proof of Lemma \ref{lemma:coarsest-T-partition}}\,{\bf :}]\label{lemma:coarsest-T-partition:Proof}
~\\~
	\begin{enumerate}
		
		\item[\it i.] Let $\,\cal N \! \subset \! T\,$ be a $\,\cal T$-partition of $\,Y \! \subset I\,$.
		
		\item[$\pmb \implies$] Suppose that $\,\cal N$ is non-coarsest. Let $\,\cal N' \! \subset \! T$ be the coarsest $\,\cal T$-partition of $\,Y \! \subset I\,$ ($\,|{\cal N}'| \, < |{\cal N}|\,$).
Any two nodes are either nested or disjoint, hence, $\cal N$ is finer than $\cal N'$: $\,\cal N \leqslant N'\,$, {\em i.e.}, every node of $\,\cal N$ is included in some node of $\,\cal N'$ (otherwise the size of $\,\cal N'$ can be reduced which contradicts that $\,\cal N'$ is the coarsest). Hence, there exists a node $N$ in $\cal N'$ that contains several nodes of $\cal N$, {\em i.e.}, $\cal N$ contains a non-coarsest $\cal T$-partition of $N$.

\item[$\pmb \impliedby$] Suppose that $\cal N$ contains a non-coarsest $\cal T$-partition of some node $N \! \subset Y$. Replacing this $\cal T$-partition of $N$ by $N$ itself yields another $\,\cal T$-partition of $\,Y$, which is coarser then $\,\cal N$. Hence, $\,\cal N$ is non-coarsest.
		
\item[\it ii.] If $\,\cal N$ and $\cal N'$ are two coarsest $\,\cal T$-partitions of $\,Y \! \subset I$, then the size of each of them is  minimal, which implies that $\,\cal N \geqslant N'\,$ and $\,\cal N \leqslant N'$ (since any two nodes either nested or disjoint). Hence, $\,\cal N = N'$.
\end{enumerate}
\end{proof}

\end{appendices}

\end{document}